\setlist[itemize]{topsep=0pt, leftmargin=3mm}
\newtheorem{theorem}{Theorem}
\newtheorem{lemma}[theorem]{Lemma}
\newcommand{\var}[1]{\text{Var}\left( #1 \right)}
\DeclareMathOperator*{\argmax}{arg\,max}
\title{Provably Efficient Online Hyperparameter Optimization with Population-Based Bandits}
\author{%
  Jack Parker-Holder \\
  University of Oxford\\
  \texttt{jackph@robots.ox.ac.uk} \\
  \And
  Vu Nguyen \\
  University of Oxford\\
  \texttt{vu@robots.ox.ac.uk} \\
  \And
  Stephen J. Roberts \\
  University of Oxford\\
  \texttt{sjrob@robots.ox.ac.uk} \\
}
\begin{document}

\maketitle
\begin{abstract}
Many of the recent triumphs in machine learning are dependent on well-tuned hyperparameters. This is particularly prominent in reinforcement learning (RL) where a small change in the configuration can lead to failure. Despite the importance of tuning hyperparameters, it remains expensive and is often done in a naive and laborious way. A recent solution to this problem is Population Based Training (PBT) which  updates both weights and hyperparameters in a \emph{single training run} of a population of agents. PBT has been shown to be particularly effective in RL, leading to widespread use in the field. However, PBT lacks theoretical guarantees since it relies on random heuristics to explore the hyperparameter space. This inefficiency means it typically requires vast computational resources, which is prohibitive for many small and medium sized labs. In this work, we introduce the first provably efficient PBT-style algorithm, Population-Based Bandits (PB2). PB2 uses a probabilistic model to guide the search in an efficient way, making it possible to discover high performing hyperparameter configurations with far fewer agents than typically required by PBT. We show in a series of RL experiments that PB2 is able to achieve high performance with a modest computational budget. 



\end{abstract}

\global\long\def\se{\hat{\text{se}}}%

\global\long\def\interior{\text{int}}%

\global\long\def\boundary{\text{bd}}%

\global\long\def\new{\text{*}}%

\global\long\def\stir{\text{Stirl}}%

\global\long\def\dist{d}%

\global\long\def\HX{\entro\left(X\right)}%
 
\global\long\def\entropyX{\HX}%

\global\long\def\HY{\entro\left(Y\right)}%
 
\global\long\def\entropyY{\HY}%

\global\long\def\HXY{\entro\left(X,Y\right)}%
 
\global\long\def\entropyXY{\HXY}%

\global\long\def\mutualXY{\mutual\left(X;Y\right)}%
 
\global\long\def\mutinfoXY{\mutualXY}%

\global\long\def\xnew{y}%

\global\long\def\bx{\mathbf{x}}%

\global\long\def\bz{\mathbf{z}}%

\global\long\def\bu{\mathbf{u}}%

\global\long\def\bs{\boldsymbol{s}}%

\global\long\def\bk{\mathbf{k}}%

\global\long\def\bX{\mathbf{X}}%

\global\long\def\tbx{\tilde{\bx}}%

\global\long\def\by{\mathbf{y}}%

\global\long\def\bY{\mathbf{Y}}%

\global\long\def\bZ{\boldsymbol{Z}}%

\global\long\def\bU{\boldsymbol{U}}%

\global\long\def\bv{\boldsymbol{v}}%

\global\long\def\bn{\boldsymbol{n}}%

\global\long\def\bV{\boldsymbol{V}}%

\global\long\def\bK{\boldsymbol{K}}%

\global\long\def\bw{\vt w}%

\global\long\def\bbeta{\mathbf{\boldsymbol{\beta}}}%

\global\long\def\bmu{\gvt{\mu}}%

\global\long\def\btheta{\boldsymbol{\theta}}%

\global\long\def\blambda{\boldsymbol{\lambda}}%

\global\long\def\bgamma{\boldsymbol{\gamma}}%

\global\long\def\bpsi{\boldsymbol{\psi}}%

\global\long\def\bphi{\boldsymbol{\phi}}%

\global\long\def\bpi{\boldsymbol{\pi}}%

\global\long\def\eeta{\boldsymbol{\eta}}%

\global\long\def\bomega{\boldsymbol{\omega}}%

\global\long\def\bepsilon{\boldsymbol{\epsilon}}%

\global\long\def\btau{\boldsymbol{\tau}}%

\global\long\def\bSigma{\gvt{\Sigma}}%

\global\long\def\realset{\mathbb{R}}%

\global\long\def\realn{\realset^{n}}%

\global\long\def\integerset{\mathbb{Z}}%

\global\long\def\natset{\integerset}%

\global\long\def\integer{\integerset}%

\global\long\def\natn{\natset^{n}}%

\global\long\def\rational{\mathbb{Q}}%

\global\long\def\rationaln{\rational^{n}}%

\global\long\def\complexset{\mathbb{C}}%

\global\long\def\comp{\complexset}%

\global\long\def\compl#1{#1^{\text{c}}}%

\global\long\def\and{\cap}%

\global\long\def\compn{\comp^{n}}%

\global\long\def\comb#1#2{\left({#1\atop #2}\right) }%

\global\long\def\nchoosek#1#2{\left({#1\atop #2}\right)}%

\global\long\def\param{\vt w}%

\global\long\def\Param{\Theta}%

\global\long\def\meanparam{\gvt{\mu}}%

\global\long\def\meanmap{\mathbf{m}}%

\global\long\def\logpart{A}%

\global\long\def\simplex{\Delta}%

\global\long\def\simplexn{\simplex^{n}}%

\global\long\def\dirproc{\text{DP}}%

\global\long\def\ggproc{\text{GG}}%

\global\long\def\DP{\text{DP}}%

\global\long\def\ndp{\text{nDP}}%

\global\long\def\hdp{\text{HDP}}%

\global\long\def\gempdf{\text{GEM}}%

\global\long\def\ei{\text{EI}}%

\global\long\def\rfs{\text{RFS}}%

\global\long\def\bernrfs{\text{BernoulliRFS}}%

\global\long\def\poissrfs{\text{PoissonRFS}}%

\global\long\def\grad{\gradient}%
 
\global\long\def\gradient{\nabla}%

\global\long\def\cpr#1#2{\Pr\left(#1\ |\ #2\right)}%

\global\long\def\var{\text{Var}}%

\global\long\def\Var#1{\text{Var}\left[#1\right]}%

\global\long\def\cov{\text{Cov}}%

\global\long\def\Cov#1{\cov\left[ #1 \right]}%

\global\long\def\COV#1#2{\underset{#2}{\cov}\left[ #1 \right]}%

\global\long\def\corr{\text{Corr}}%

\global\long\def\sst{\text{T}}%

\global\long\def\SST{\sst}%

\global\long\def\ess{\mathbb{E}}%

\global\long\def\Ess#1{\ess\left[#1\right]}%


\global\long\def\fisher{\mathcal{F}}%

\global\long\def\bfield{\mathcal{B}}%
 
\global\long\def\borel{\mathcal{B}}%

\global\long\def\bernpdf{\text{Bernoulli}}%

\global\long\def\betapdf{\text{Beta}}%

\global\long\def\dirpdf{\text{Dir}}%

\global\long\def\gammapdf{\text{Gamma}}%

\global\long\def\gaussden#1#2{\text{Normal}\left(#1, #2 \right) }%

\global\long\def\gauss{\mathbf{N}}%

\global\long\def\gausspdf#1#2#3{\text{Normal}\left( #1 \lcabra{#2, #3}\right) }%

\global\long\def\multpdf{\text{Mult}}%

\global\long\def\poiss{\text{Pois}}%

\global\long\def\poissonpdf{\text{Poisson}}%

\global\long\def\pgpdf{\text{PG}}%

\global\long\def\iwshpdf{\text{InvWish}}%

\global\long\def\nwpdf{\text{NW}}%

\global\long\def\niwpdf{\text{NIW}}%

\global\long\def\studentpdf{\text{Student}}%

\global\long\def\unipdf{\text{Uni}}%

\global\long\def\transp#1{\transpose{#1}}%
 
\global\long\def\transpose#1{#1^{\mathsf{T}}}%

\global\long\def\mgt{\succ}%

\global\long\def\mge{\succeq}%

\global\long\def\idenmat{\mathbf{I}}%

\global\long\def\trace{\mathrm{tr}}%

\section{Introduction}

Deep neural networks \cite{deeplearning, lstm, alexnet} have achieved remarkable success in a variety of fields. Some of the most notable results have come in reinforcement learning (RL), where the last decade has seen a series of significant achievements in games \cite{alphago, dqn, dota} and robotics \cite{dexterous_openai, qt_opt}. However, it is notoriously difficult to reproduce RL results, often requiring excessive trial-and-error to find the optimal hyperparameter configurations \cite{hpo_reproducible, deeprlmatters,boil}. 

This has led to a surge in popularity for Automated Machine Learning (AutoML, \cite{automl_book}), which seeks to automate the training of machine learning models. A key component in AutoML is automatic hyperparameter selection \cite{hyperparam_opt, hpo_nlp}, where popular approaches include Bayesian Optimization (BO, \cite{bayesopt_nando, bo_jmlr, nguyen2019knowing}) and Evolutionary Algorithms (EAs, \cite{evolution2, evolution3}). Using automated methods for RL (AutoRL) is crucial for accessibility and for generalization, since different environments typically require totally different hyperparameters \cite{deeprlmatters}. Furthermore, it may even be possible to improve performance of existing methods using learned parameters. In fact, BO was revealed to play a valuable role in AlphaGo, improving the win percentage before the final match with Lee Sedol \cite{bo_alphago}.

A particularly promising approach, Population Based Training (PBT, \cite{pbt, pbt2}), showed it is possible to achieve impressive performance by updating both weights and hyperparameters during a \emph{single} training run of a population of agents. PBT works in a similar fashion to a human observing experiments, periodically replacing weaker performers with superior ones. PBT has shown to be particularly effective in reinforcement learning, and has been used in a series of recent works to improve performance \cite{kickstarting, liu2018emergent, impala, Jaderberg859}. 

\begin{figure}[h]
    \centering\subfigure{\includegraphics[width=.8\linewidth]{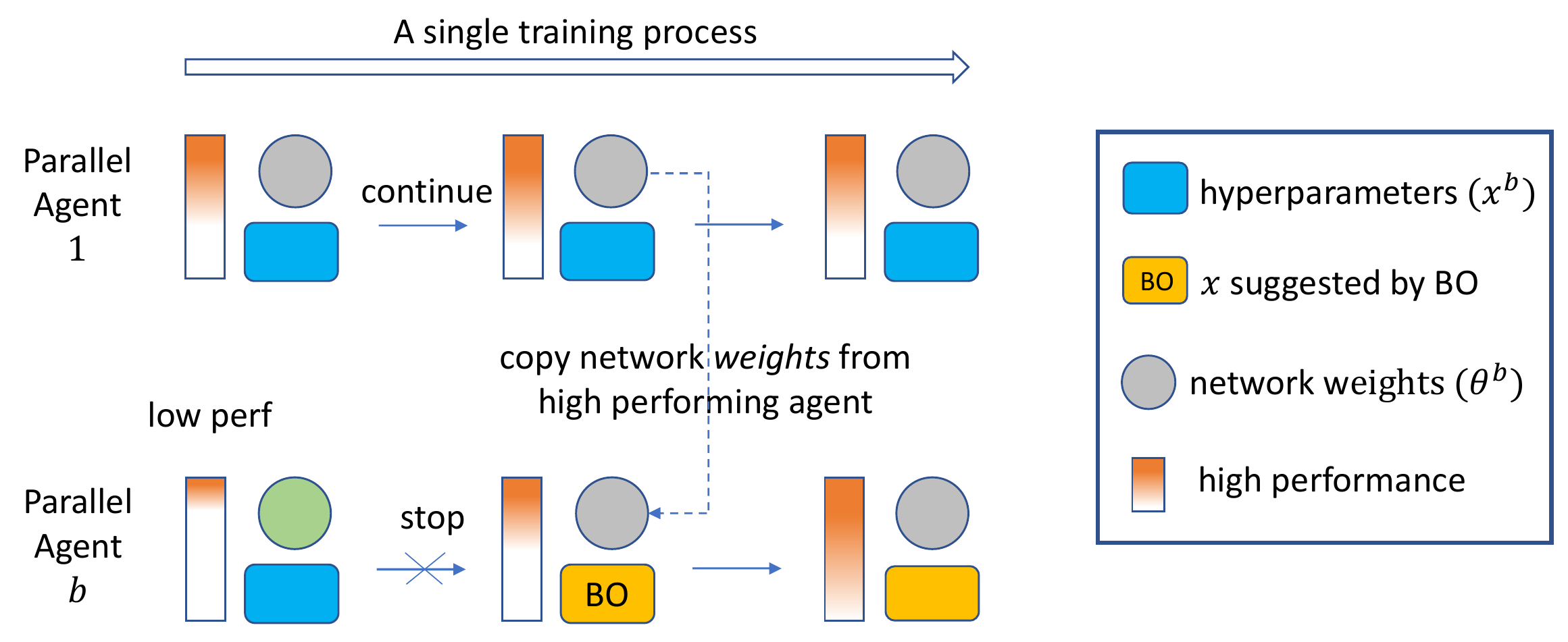}}
    \vspace{-1mm}
    \caption{\footnotesize{Population-Based Bandit Optimization: a population of agents is trained in parallel. Each agent has weights (grey) and hyperparameters (blue). The agents are evaluated periodically (orange bar), and if an agent is underperforming, it's weights are replaced by randomly copying one of the better performing agents, and its hyperparameters are selected using Bayesian Optimization.}}
    \label{figure:demo}
    \vspace{-3mm}
\end{figure}

However, PBT's Achilles heel comes from its reliance on heuristics for exploring the hyperparameter space. This essentially creates new meta-parameters, which need to be tuned. In many cases, PBT underperforms a random baseline without vast computational resources, since small populations can collapse to a suboptimal mode. In addition, PBT lacks theoretical grounding, given that greedy exploration methods suffer from unbounded regret.

Our key contribution is the first provably efficient PBT-style algorithm, Population-Based Bandit Optimization, or PB2 (Fig. \ref{figure:demo}). To do this, we draw the connection between maximizing the reward in a PBT-style training procedure to minimizing the regret in bandit optimization \cite{gp_ucb, gp_bucb}. Formally, we consider the online hyperparameter selection problem as batch Gaussian Process bandit optimization of a time-varying function. PB2 is more computationally efficient than the existing batch BO approaches \cite{desautels2014parallelizing,gonzalez2016batch} since it can (1) learn the optimal schedule of hyperparameters and (2) optimize them in a single training run. We derive a bound on the cumulative regret for PB2, the first such result for a PBT-style algorithm. Furthermore, we show in a series of RL experiments that PB2 is able to achieve high rewards with a modest computational budget. 



\section{Problem Statement} \label{sec:background}

In this paper, we consider the problem of selecting optimal hyperparameters, $x_t^b$, from a compact, convex subset $\mathcal{D}\in \mathbb{R}^d$ where $d$ is the number of hyperparameters. Here the index $b$ refers to the $b$th agent in a population/batch, and the subscript $t$ represents the number of timesteps/epochs/iterations elapsed during the training of a neural network.
Particularly, we consider the \textit{schedule} of optimal hyperparameters over time $\left( x_t^b \right)_{t=1,...T}$.

Population-Based Training (PBT, \cite{pbt}) is a well-known algorithm for learning a schedule of hyperparameters by training a population (or batch) of $B$ agents in parallel. Each agent $b\in B$ has both hyperparameters $x_t^b \in \mathbb{R}^d$ and weights $\theta_t^b$. At every $t_{\mathrm{ready}}$ step interval (i.e. if $t \mod t_{\mathrm{ready}} = 0$), the agents are ranked and the worst performing agents are replaced with members of the best performing agents ($A \subset B$) as follows:
\begin{itemize}
    \item \textbf{Weights} ($\theta_t^b$): copied from one of the best performing agents, i.e. $\theta_t^j \sim \mathrm{Unif}\{\theta_t^j\}_{j \in A}$.
    \item \textbf{Hyperparameters} ($x_t^b$): with probability $\epsilon$ it uses \textbf{random exploration}, and re-samples from the original distribution, otherwise, it uses \textbf{greedy exploration}, and perturbs one of the best performing agents, i.e $\{x^j \times \lambda\}_{j\in A}, \lambda \sim [0.8, 1.2]$. 
\end{itemize}
This leads to the learning of hyperparameter schedules -- a single agent can have different configurations at different time-steps during a single training process. This is important in the context of training deep reinforcement learning agents as dynamic hyperparameter schedules have been shown to be effective \cite{hoof, pbt, impala}. On the other hand, most of the existing hyperparameter optimization approaches aim to find a fixed set of hyperparameters over the course of optimization.

To formalize this problem, let $F_t(x_t)$ be an objective function under a given set of hyperparameters at timestep $t$. An example of $F_t(x_t)$ could be the reward for a deep RL agent. When training for a total of $T$ steps, our goal is to maximize the final performance $F_T(x_T)$. We formulate this problem as optimizing the time-varying black-box reward function $f_t$, over $\mathcal{D}$. Every $t_{\mathrm{ready}}$ steps, we observe and record noisy observations, $y_t = f_t(x_t) + \epsilon_t$, where $\epsilon_t \sim \mathcal{N}(0, \sigma^2\mathbf{I})$ for some fixed $\sigma^2$. The function $f_t$ represents the change in $F_t$ after training for $t_{\mathrm{ready}}$ steps, i.e. $F_t - F_{t-t_\mathrm{ready}}$. We define the best choice at each timestep as $x_t^* = \argmax_{x_t\in\mathcal{D}}f_t(x_t)$, and so the \textbf{regret} of each decision as $r_t = f_t(x_t^*) - f_t(x_t)$. 

\begin{lemma} \label{lem:maxreward_minregret}
Maximizing the final performance $F_T$ of a model with respect to a given hyperparameter schedule $\{x_t\}_{t=1}^T$ is equivalent to maximizing the time-varying black-box function $f_t(x_t)$ and minimizing the corresponding cumulative regret $r_t(x_t)$,
\begin{align}
\max F_T(x_T)= \max \sum^T_{t=1} f_t(x_t) =  \min \sum^T_{t=1} r_t(x_t).
\end{align}
\end{lemma}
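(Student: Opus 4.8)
The plan is to prove the two equalities separately, each by an elementary identity: the first by telescoping the per-interval increments, the second by adding and subtracting the per-timestep optimum. Throughout I treat $t$ as indexing the evaluation rounds (the multiples of $t_{\mathrm{ready}}$), so that ``$\sum_{t=1}^{T}$'' is a finite sum over the points at which we record $y_t = f_t(x_t)+\epsilon_t$.

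First I would establish $F_T(x_T) = \sum_{t=1}^T f_t(x_t)$. By definition $f_t$ is the increment of the objective over a window of $t_{\mathrm{ready}}$ steps, $f_t = F_t - F_{t-t_{\mathrm{ready}}}$, so summing over the evaluation rounds and cancelling consecutive terms leaves $F_T - F_0$. Taking the initial performance $F_0$ to be a fixed constant (zero without loss of generality, since it does not depend on the schedule) gives $F_T(x_T) = \sum_{t=1}^T f_t(x_t)$, hence $\max F_T(x_T) = \max \sum_{t=1}^T f_t(x_t)$ where both maxima are over admissible schedules $\{x_t\}_{t=1}^T$ in $\mathcal{D}$.

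Next, for the second equality, I would use $x_t^* = \argmax_{x_t\in\mathcal{D}} f_t(x_t)$ and $r_t(x_t) = f_t(x_t^*) - f_t(x_t)$ to write
\begin{align}
\sum_{t=1}^T f_t(x_t) = \sum_{t=1}^T f_t(x_t^*) - \sum_{t=1}^T r_t(x_t).
\end{align}
The first sum on the right is a constant independent of the chosen schedule, so maximizing the left-hand side over $\{x_t\}$ is equivalent to minimizing $\sum_{t=1}^T r_t(x_t)$, and the common optimizer is the greedy schedule $x_t = x_t^*$.

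The step I expect to require the most care is arguing that this chain of identities equates the \emph{optimization problems}, not merely the objective values at a fixed schedule: this hinges on $f_t$ being treated as a function of the current hyperparameters $x_t$ (and of $t$) alone, so that $\sum_t f_t(x_t)$ decouples across rounds and each term can be maximized independently, making the per-interval greedy choice globally optimal. In the underlying training process $f_t$ also depends implicitly on the inherited weights $\theta_t$, hence on earlier hyperparameters; I would state explicitly that we adopt the standard PBT modeling abstraction in which this dependence is folded into the time-varying reward $f_t$, which is exactly what licenses the reduction to batch, time-varying GP-bandit regret minimization used in the remainder of the paper.
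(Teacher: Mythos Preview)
Your proposal is correct and follows essentially the same approach as the paper: a telescoping identity to reduce $F_T$ (up to an initial constant) to $\sum_t f_t(x_t)$, followed by the observation that $\sum_t f_t(x_t^*)$ is a schedule-independent constant so maximizing $\sum_t f_t$ is equivalent to minimizing $\sum_t r_t$. Your final paragraph, making explicit the modeling abstraction that $f_t$ depends only on $(t,x_t)$ so the sum decouples, is actually more careful than the paper's own proof, which simply asserts the equivalence without discussing the implicit dependence on inherited weights.
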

In subsequent sections, we present a time-varying bandit approach which is used to minimize the cumulative regret $R_T = \sum_{t=1}^T r_t$. Lemma \ref{lem:maxreward_minregret} shows this is equivalent to maximizing the final performance/reward of a neural network model (see: Section \ref{sec:theory} in the appendix for the proof).

\section{Population-Based Bandit Optimization}
\label{sec:pb2}

We now introduce Population-Based Bandit Optimization (PB2) for optimizing the hyperparameter schedule $\left(x_t^b \right),\forall t={1,...,T}$ using parallel agents $b={1,...,B}$. After each agent $b$ completes $t_\mathrm{ready}$ training steps, we store the data $(y_t^b, t, x_t^b)$ in a dataset $D_t$ which will be used to make an informed decision for the next set of hyperparameters.


We below present the mechanism to select the next hyperparameters for parallel agents. Motivated by the equivalence of the maximized reward and minimized cumulative regret in Lemma \ref{lem:maxreward_minregret}, we propose the parallel time-varying bandit optimization.

\subsection{Parallel Gaussian Process Bandits for a Time-Varying Function}\label{sec:TVGP}

We first describe the time-varying Gaussian process as the surrogate models, then we extend it to the parallel setting for our PB2 algorithm.




\paragraph{Time-varying Gaussian process as the surrogate model.}
Following previous works in the GP-bandit literature \cite{gp_ucb}, we model $f_t$ using a Gaussian Process (GP, \cite{RasmussenGP}) which is specified by a mean function $\mu_t : \mathcal{X} \rightarrow \mathbb{R}$ and a kernel (covariance function) $k : \mathcal{D} \times \mathcal{D}  \rightarrow \mathbb{R}$. If $f_t \sim  GP(\mu_t, k)$, then $f_t(x_t)$ is distributed normally $\mathcal{N} (\mu_t(x_t), k(x_t, x_t)) \text{ for all } x_t \in \mathcal{D}$. After we have observed $T$ data points $\{(x_t, f(x_t))\}_{t=1}^T$, the GP posterior belief at new point $x'_t \in \mathcal{D}$, $f_t(x'_t)$ follows  a Gaussian distribution with mean $\mu_t(x')$ and variance $\sigma_t^2(x')$ as: 
\begin{equation}
\label{eqn:mu}
    \mu_t(x') \coloneqq \textbf{k}_t(x')^T(\textbf{K}_t + \sigma^2\textbf{I})^{-1}\mathbf{y}_t
\end{equation}
\begin{equation}
\label{eqn:sig}
    \sigma_t^2(x') \coloneqq k(x', x') - \textbf{k}_t(x')^T(\textbf{K}_t +\sigma^2\textbf{I})^{-1}\textbf{k}_t(x'),
\end{equation}
where $\mathbf{K}_t \coloneqq \{k(x_i, x_j)\}_{i, j=1}^t$ and $\mathbf{k}_t \coloneqq \{k(x_i, x'_t)\}_{i=1}^t$. The GP predictive mean and variance above will later be used to represent the exploration-exploitation trade-off in making decision under the presence of uncertainty. 

To represent the non-stationary nature of a neural network training process, we cast the problem of optimizing neural network hyperparameters as time-varying bandit optimization. We follow \cite{bogunovic2016time} to formulate this problem by modeling the reward function under the time-varying setting as follows:
\begin{equation}
     f_1(x) = g_1(x), \; \; f_{t+1}(x) = \sqrt{1-\omega}f_t(x) + \sqrt{\omega}fg_{t+1}(x) \; \: \: \:  \forall t \geq 2,
\end{equation}
where $g_1, g_2, ...$ are independent random functions with $g\sim GP(0, k)$ and $\omega \in[0,1]$ models how the function varies with time, such that if $\omega=0$ we return to GP-UCB and if $\omega=1$ then each evaluation is completely independent. This model introduces a new hyperparameter ($\omega$), however, we note it can be optimized by maximizing the marginal likelihood for a trivial additional cost compared to the expensive function evaluations (we include additional details on this in the Appendix). This leads to the extensions of Eqs. (\ref{eqn:mu}) and (\ref{eqn:sig}) 
using the new covariance matrix $\Tilde{\textbf{K}}_t = \textbf{K}_t \circ \textbf{K}_t^{\mathrm{time}}$ where $\textbf{K}_t^{\mathrm{time}} = [(1-\omega)^{|i-j|/2}]_{i,j=1}^T$ and $\Tilde{\textbf{k}}_t(x) = \textbf{k}_t \circ \textbf{k}_t^{time}$ with $\textbf{k}_t^{time} = [(1-\omega)^{(T+1-i)/2}]_{i=1}^T$. Here $\circ$ refers to the Hadamard product. 





\paragraph{Selecting hyperparameters for parallel agents.}

In the PBT setting, we consider an entire population of parallel agents. This changes the problem from a sequential to a \textit{batch} blackbox optimization. This poses an additional challenge to select multiple points simultaneously $x^b_t$ without full knowledge of all $\{(x^j_t, y^j_t)\}_{j=1}^{j-1}$. A key observation in \cite{gp_bucb} is that since a GP's variance (Eqn. \ref{eqn:sig}) does not depend on $y_t$, the acquisition function can  account for incomplete trials by updating the uncertainty at the pending evaluation points. Concretely, we define $x^b_{t}$ to be the $b$-th point selected in a batch, after $t$ timesteps. This point may draw on information from $t + (b-1)$ previously selected points. In the single agent, sequential case, we set $B=1$ and recover $t,b = t-1$. Thus, at the iteration $t$, we find a next batch of $B$ samples $\left[x^1_t,x^2_t,...x^B_t \right]$  by sequentially maximizing the following acquisition function:
\begin{equation}
\label{eq:gp_bucb_acq}
    x^b_t = \argmax_{x\in \mathcal{D}} \mu_{{t,1}}(x) + \sqrt{\beta_t} \sigma_{{t,b}}(x),\forall b=1,...B
\end{equation}
for $\beta_t >0$. In Eqn. (\ref{eq:gp_bucb_acq}) we have the mean from the previous batch ($\mu_{t,1}(x)$) which is fixed, but can update the uncertainty using our knowledge of the agents currently training ($\sigma_{t,b}(x)$). This significantly reduces redundancy, as the model is able to explore distinct regions of the space. 

\subsection{PB2 Algorithm and Convergence Guarantee}

To estimate the GP predictive distribution in Eqs. (\ref{eqn:mu}, \ref{eqn:sig}), we use the product form  $\tilde{k} = k^{\mathrm{SE}} \circ k^{\mathrm{time}}$ \cite{krause2011contextual,bogunovic2016time}, which considers the time varying nature of training a neural network. Then, we use Eqn. (\ref{eq:gp_bucb_acq}) to select a batch of points by utilizing the reduction in uncertainty for models currently training. As far as we know, this is the first use of a time-varying kernel in the PBT-style setting. Unlike PBT, this allows us to efficiently make use of data from previous trials when selecting new configurations, rather than reverting to a uniform prior, or perturbing existing configurations. The full algorithm is shown in Algorithm \ref{Alg:pb2}.

\scalebox{0.95}{
\begin{minipage}{.99\linewidth}
    \begin{algorithm}[H]
    \textbf{Initialize:} Network weights $\{\theta_0^b\}_{b=1}^B$, hyperparameters $\{x_0^b\}_{b=1}^B$, dataset $D_0 = \emptyset$ \\
    \textbf{(in parallel)} \For{$t=1, \ldots, T-1$}{
      1. \textbf{Update Models:} $\theta_{t}^b \leftarrow \mathrm{step}(\theta_{t-1}^b | x_{t-1}^b) $ \\
      2. \textbf{Evaluate Models:} $y_t^b = F_t(x_{t}^b)- F_{t-1}(x_{t-1}^b) + \epsilon_t$ for all $b$ \\
      3. \textbf{Record Data:} $D_t = D_{t-1} \cup \{(y_t^b, t, x_t^b)\}_{b=1}^B$ \\
      4. If $t \mod t_{\mathrm{ready}} = 0$: 
      \begin{itemize}
          \item \textbf{Copy weights:} Rank agents, if $\theta^b$ is in the bottom $\lambda\%$ then copy weights $\theta^j$ from the top $\lambda\%$.
          \item \textbf{Select hyperparameters:} Fit a GP model to $D_t$ and select hyper-parameters $x^b_t, \forall b \le B$ by maximizing Eq. (\ref{eq:gp_bucb_acq}).
      \end{itemize}}
     \textbf{Return the best trained model $\theta$}
     \caption{Population-Based Bandit Optimization (PB2)}
    \label{Alg:pb2}
    \end{algorithm}
\end{minipage}
}

\noindent Next we present our main theoretical result, showing that PB2 is able to achieve sublinear regret when the time-varying function is correlated. This is the first such result for a PBT-style algorithm. 
\begin{theorem}
Let the domain $\mathcal{D}\subset[0,r]^{d}$ be compact and convex where $d$ is the dimension and suppose that the kernel is such that $f_t \sim GP(0,k)$ is almost surely continuously differentiable and satisfies Lipschitz assumptions $\forall L_t \ge0,t\le\mathcal{T},p(\sup\left|\frac{\partial f_{t}(\bx)}{\partial\bx^{(d)}}\right|\ge L_t)\le ae^{-\left(L_t/b\right)^{2}}$ for some $a,b$. Pick $\delta\in(0,1)$, set $\beta_{T}=2\log\frac{\pi^{2}T^{2}}{2\delta}+2d\log rdbT^{2}\sqrt{\log\frac{da\pi^{2}T^{2}}{2\delta}}$ and define $C_{1}=32/\log(1+\sigma_{f}^{2})$, the PB2 algorithm satisfies the following regret bound after $T$ time steps over $B$ parallel agents with probability at least $1-\delta$:
\begin{align*}
R_{TB}=\sum_{t=1}^{T}f_{t}(\bx_{t}^{*})-\max_{b=1,...,B} f_{t}(\bx_{t,b})\le \sqrt{C_{1}T\beta_{T}\left(\frac{T}{\tilde{N}B}+1\right)\left(\gamma_{\tilde{N}B}+\left[\tilde{N}B\right]^{3}\omega\right)}+2
\end{align*}
the bound holds for any block length
$\tilde{N}\in\left\{ 1,...,T\right\} $ and $B\ll T$.
\end{theorem}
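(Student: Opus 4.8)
The plan is to merge the analysis of time-varying GP-UCB (Bogunovic et al., via the kernel $\tilde{k}=k^{\mathrm{SE}}\circ k^{\mathrm{time}}$) with that of batch GP-UCB / GP-BUCB (Desautels et al., via the frozen-mean, updated-variance acquisition of Eq.~(\ref{eq:gp_bucb_acq})) — natural, since PB2 is exactly the composition of these two acquisition rules. The argument decomposes into four ingredients: (i) a high-probability confidence bound $|f_t(\bx)-\mu_{t,1}(\bx)|\le\sqrt{\beta_t}\,\sigma_{t,b}(\bx)$ holding uniformly over $\bx\in\mathcal{D}$, $t\le T$, $b\le B$; (ii) a per-round instantaneous regret bound $f_t(\bx_t^*)-f_t(\bx_{t,b})\le 2\sqrt{\beta_t}\,\sigma_{t,b}(\bx_{t,b})$; (iii) a block-decomposition bound on $\sum_{t,b}\sigma_{t,b}^2(\bx_{t,b})$ in the combined batch/time-varying regime; and (iv) a Cauchy--Schwarz step turning (iii) into the regret bound, plus bookkeeping of constants and discretization error.

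For (i) I would follow the continuous-domain discretization of GP-UCB. At step $t$ build a uniform grid $\mathcal{D}_t\subset\mathcal{D}\subset[0,r]^d$ with $(\tau_t)^d$ points so every $\bx$ is within $rd/\tau_t$ of a grid point; the Lipschitz tail hypothesis $p(\sup|\partial f_t/\partial\bx^{(d)}|\ge L)\le ae^{-(L/b)^2}$ forces the discretization error to be at most $1/t^2$ with probability $\ge1-\delta/2$ after a union bound over $t$ and the $d$ coordinates — this is precisely the source of the $2d\log(rdbT^2\sqrt{\log(da\pi^2T^2/2\delta)})$ summand in $\beta_T$. On the finite grids, a Gaussian-tail union bound over $\mathcal{D}_t$, $t\le T$, and $b\le B$ yields the frozen-mean interval $\mu_{t,1}\pm\sqrt{\beta_t}\sigma_{t,1}$; here one invokes the GP-BUCB observation that $\sigma_{t,b}$ does not depend on the unobserved $y_{t,b'}$ and that conditioning on the pending batch points can only shrink the variance, so one converts to $\sigma_{t,b}$ by absorbing the bounded ratio $\sigma_{t,1}/\sigma_{t,b}$ (finite since $B\ll T$) into the constant.

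Given (i), step (ii) is the usual UCB sandwich: $\bx_{t,b}$ maximizes $\mu_{t,1}(\cdot)+\sqrt{\beta_t}\sigma_{t,b}(\cdot)$, so $f_t(\bx_t^*)\le\mu_{t,1}(\bx_t^*)+\sqrt{\beta_t}\sigma_{t,b}(\bx_t^*)\le\mu_{t,1}(\bx_{t,b})+\sqrt{\beta_t}\sigma_{t,b}(\bx_{t,b})\le f_t(\bx_{t,b})+2\sqrt{\beta_t}\sigma_{t,b}(\bx_{t,b})$, whence the batch regret $r_t=f_t(\bx_t^*)-\max_b f_t(\bx_{t,b})=\min_b r_{t,b}\le\tfrac1B\sum_{b=1}^B 2\sqrt{\beta_t}\sigma_{t,b}(\bx_{t,b})$. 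For (iii), partition the $T$ timesteps into $\lceil T/\tilde{N}\rceil\le T/\tilde{N}+1$ blocks of length $\tilde{N}$; inside a block ``reset'' the GP and treat its $\tilde{N}B$ selected points as a fresh time-varying batch instance. The elementary inequality $s^2\le\frac{\sigma^2}{\log(1+\sigma^{-2})}\log(1+\sigma^{-2}s^2)$ together with the mutual-information chain rule bounds $\sum_{(t,b)\in\mathrm{block}}\sigma_{t,b}^2(\bx_{t,b})$ by the information gain of those $\tilde{N}B$ points, which by Bogunovic's perturbation estimate for the Hadamard-product kernel is at most $C_2(\gamma_{\tilde{N}B}+(\tilde{N}B)^3\omega)$ — the cubic term arising from conservatively charging each of the $\tilde{N}B$ points to a distinct time index, which only loosens the bound. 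Summing over blocks gives $\sum_{t,b}\sigma_{t,b}^2(\bx_{t,b})\le(\tfrac{T}{\tilde{N}}+1)C_2(\gamma_{\tilde{N}B}+(\tilde{N}B)^3\omega)$.

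Finally (iv): by Cauchy--Schwarz twice, $R_{TB}^2\le T\sum_t r_t^2\le T\sum_t\tfrac1B\sum_b 4\beta_t\sigma_{t,b}^2(\bx_{t,b})\le\tfrac{4T\beta_T}{B}\sum_{t,b}\sigma_{t,b}^2(\bx_{t,b})$; substituting (iii), using $\beta_t\le\beta_T$, $B\ge1$, and choosing $C_1=32/\log(1+\sigma_f^2)\ge4C_2$ yields $R_{TB}\le\sqrt{C_1 T\beta_T(\tfrac{T}{\tilde{N}B}+1)(\gamma_{\tilde{N}B}+(\tilde{N}B)^3\omega)}$, and the additive $2$ collects the per-step discretization errors $\sum_{t\ge1}1/t^2<2$. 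The block length $\tilde{N}$ was arbitrary, so the bound holds for every $\tilde{N}\in\{1,\dots,T\}$. I expect step (iii) to be the main obstacle: showing that the frozen-mean/updated-variance batch mechanism is simultaneously compatible with the block-and-information-gain machinery of the time-varying kernel — in particular that the GP-BUCB variance-ratio constant and the time-varying kernel-perturbation constant combine into a single $C_1$ without $B$ entering the exponent — and pinning down the precise $(\tilde{N}B)^3\omega$ penalty coming from $k^{\mathrm{SE}}\circ k^{\mathrm{time}}$.
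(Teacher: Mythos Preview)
Your overall architecture (confidence bound $\Rightarrow$ per-step regret $\Rightarrow$ information-gain bound $\Rightarrow$ Cauchy--Schwarz) is right, and your handling of the block decomposition for $\tilde{\gamma}_{TB}$ in step (iii) matches the paper. But step (i)/(ii) contains a real gap, and the paper closes it by a different device than the one you propose.

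The confidence interval that actually holds is $|f_t(\bx)-\mu_{t,1}(\bx)|\le\sqrt{\beta_t}\,\sigma_{t,1}(\bx)$; since $\sigma_{t,b}\le\sigma_{t,1}$ for $b>1$, your claimed bound with $\sigma_{t,b}$ on the right is \emph{tighter} and does not follow. You invoke the GP-BUCB variance-ratio trick, but that ratio $\sigma_{t,1}/\sigma_{t,b}$ is not bounded merely because $B\ll T$: Desautels et al.\ need an explicit initialization phase to force $\sigma_{t,1}/\sigma_{t,b}\le e^{C}$, and even then the factor $e^{2C}$ multiplies $\beta_t$. PB2 has no such initialization, and the $\beta_T$ in the theorem carries no such factor, so ``absorbing into the constant'' does not recover the stated bound.

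The paper sidesteps this entirely. It bounds the batch regret using \emph{only the first element}: $r_t^B=\min_b r_{t,b}\le r_{t,1}\le 2\sqrt{\beta_t}\,\sigma_{t,1}(\bx_{t,1})+1/t^2$, so the confidence bound is the standard one and no ratio is needed. The $1/B$ gain then comes from a new comparison lemma: define an auxiliary \emph{uncertainty-sampling} sequence $\bx_{t,b}^{\textrm{US}}=\arg\max_\bx\sigma_{t,b-1}(\bx)$ and use ``information never hurts'' to show $\sigma_{t+1,1}(\bx_{t+1,1}^{\textrm{PB2}})\le\sigma_{t,b}(\bx_{t,b}^{\textrm{US}})$ for every $b\le B$. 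Summing this telescopes to
\[
\sum_{t=2}^{T}\sigma_{t,1}(\bx_{t,1})\;\le\;\frac{1}{B}\sum_{t=1}^{T}\sum_{b=1}^{B}\sigma_{t,b}(\bx_{t,b}^{\textrm{US}}),
\]
and it is the right-hand side (evaluated at the US points, not the PB2 points) that is then bounded by $\tilde{\gamma}_{TB}$ via the log-variance/information-gain identity and Lemma~\ref{lem:bound_MIG_timevarying}. So the $B$-dependence enters through this uncertainty-sampling dominance, not through averaging $r_{t,b}$ over the batch or through a GP-BUCB ratio. This is the missing idea in your proposal.
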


This result shows the regret for PB2 decreases as we increase the population size $B$. This should be expected, since adding computational resources should benefit final performance (which is equivalent to minimizing regret). We demonstrate this property in Table \ref{table:rl_results}. When using a single agent $B=1$, our bound becomes the time-varying GP-UCB setting in \cite{bogunovic2016time}. 

In our setting, if the time-varying function is highly correlated, i.e., the information between $f_1(.)$ and $f_T(.)$ does not change significantly, we have  $\omega \rightarrow 0$ and $\tilde{N} \rightarrow T$. Then, the regret bound grows sublinearly with the number of iterations $T$, i.e., $\lim_{T \rightarrow\infty}\frac{R_{TB}}{T}=0$. On the other hand (in the worst case), if the time-varying function is not correlated at all, such as $\tilde{N} \rightarrow 1$ and $ \omega \rightarrow 1$, then PB2 achieves linear regret \citep{bogunovic2016time}. 

\paragraph{Remark.}
This theoretical result is novel and significant in two folds. First, by showing the equivalent between maximizing reward and minimizing the bandit regret, this regret bound quantifies the gap between the optimal reward and the achieved reward (using parameters selected by PB2). The regret bound extends the result established by \cite{gp_ucb,bogunovic2016time}, to a more general case with parallelization. To the best of our knowledge, this is the first kind of convergence guarantee in a PBT-style algorithm. 

Our approach is advantageous against all existing batch BO approaches \cite{gp_bucb,gonzalez2016batch} in that PB2 considers optimization in a \textit{single training run} of parallel agents while the existing works need to evaluate using \textit{multiple training runs} of parallel agents which are more expensive. In addition, PB2 can learn a \textit{schedule} of hyperparameters while the existing batch BO can only learn a static configuration. 


\section{Related Work}
\label{sec:related}

\textbf{Hyperparameter Optimization.} Hyperparameter optimization \cite{hpo_datasets, hyperparam_opt, hpo_hutter} is a crucial component of any high performing machine learning model. In the past, methods such as grid search and random search \cite{randomsearch} have proved popular, however, with increased focus on Automated Machine Learning (AutoML, \cite{automl_book}), there has been a great deal of progress moving beyond these approaches. This success has led to a variety of tools becoming available \cite{hutter_sklearn, autostatistician}. 

\textbf{Population Based Approaches.} Taking inspiration from biology, population-based methods have proved effective for blackbox optimization problems \cite{cmaes}. Evolutionary Algorithms (EAs, \cite{evo_vs_part}) take many forms, one such method is Lamarckian EAs \cite{lamarkian} in which parameters are inherited whilst hyperparameters are evolved. Meanwhile, other methods learn both hyperparameters and weights \cite{evolution2, evolution3}. These works motivate the recently introduced PBT algorithm \cite{pbt}, whereby network parameters are learned via gradient descent, while hyperparameters are evolved. Its key strengths lie in the ability to learn high performing hyperparameter schedules in a single training run, leading to strong performance in a variety of settings \cite{liu2018emergent, pbt2, impala, kickstarting}. PBT works by focusing on high performing configurations, however, this greedy property means it is unlikely to explore areas of the search space which are ``late bloomers''. In addition, the core components of the algorithm rely on handcrafted meta-hyperparameters, such as the degree of mutation. These design choices mean the algorithm lacks theoretical guarantees. We address these issues in our work. 

\textbf{Bayesian Optimization.} Bayesian Optimization (BO \cite{bayesopt_nando, shahriari2015taking,CoCaBO}) is a sequential model-based blackbox optimization method \cite{sequentialmb}. BO works by building a surrogate model of the blackbox function, typically taken to be a Gaussian Process \cite{RasmussenGP}. BO has been shown to produce state-of-the-art results in terms of sample efficiency, making dramatic gains in several prominent use cases \cite{bo_alphago}. Over the past few years there has been increasing focus on distributed implementations \cite{ bayeslocalpen, async_alvi} which seek to select a batch of configurations for concurrent evaluation. Despite this increased efficiency, these methods still require multiple training runs. Another recent method, Freeze-Thaw Bayesian Optimization \cite{freezethaw} considers a `bag' of current solutions, with their future loss assumed to follow an exponential decay. The next model to optimize is chosen via entropy maximisation. This approach bears similarities in principle with PBT, but from a Bayesian perspective. The method, however, makes  assumptions about the shape of the loss curve, does not adapt hyperparameters during optimization (as in PBT) and is sequential rather than parallelized. Our approach takes appealing properties of both these methods, using the Bayesian principles but adapting in an online fashion.

\textbf{Hybrid Algorithms.} Bayesian and Evolutionary approaches have been combined in the past. In \cite{boagecco}, the authors demonstrate using an early version of BO improves a simple Genetic Algorithm, while \cite{harmless} shows promising results through combining BO and random search. In addition, the recently popular Hyperband algorithm \cite{hyperband}, was shown to exhibit stronger performance with a BO controller \cite{bohb}. The main weakness of these methods is their inability to learn schedules.  

\textbf{Hyperparameter Optimization for Reinforcement Learning (AutoRL).} Finally, methods have been proposed specifically for RL, dating back to the early 1990s \cite{Sutton92}. These methods have typically had a narrower focus, optimizing an individual parameter. More recent work \cite{hoof} proposes to adapt hyperparameters online, by exploring different configurations in an off-policy manner in between iterations. Additionally \cite{ompac} concurrently proposed a similar to PBT, using evolutionary hyperparameter updates specifically for RL. These methods show the benefit of re-using samples, and we believe it would be interesting future work to consider augmenting our method with off-policy or synthetic samples (from a learned dynamics model) for the specific RL use case.

\section{Experiments}
\label{sec:experiments}

We focus our experiments on the RL setting, since it is notoriously sensitive to hyperparameters \cite{deeprlmatters}. We evaluate population sizes of $B\in\{4,8\}$, which means the algorithm can be run locally on most modern computers. This is significantly less than the $B>20$ used in the original PBT paper \cite{pbt}. 

All experiments were conducted using the tune library \cite{liaw2018tune, rllib}\footnote{See code here: \url{https://github.com/jparkerholder/PB2}.}. Our GP implementation is extended from GPy \cite{gpy2014}, where we use the squared exponential kernel in combination with the time kernel as described in Section \ref{sec:pb2}. We optimize all GP-kernel hyperparameters, as well as the block length $\Tilde{N}$, by maximizing the marginal likelihood \cite{RasmussenGP}. 

\subsection{On Policy Reinforcement Learning}

We consider optimizing a policy for continuous control problems from the OpenAI Gym \cite{gym}. In particular, we seek to optimize the hyperparameters for Proximal Policy Optimization (PPO, \cite{schulman2017proximal}), for the following tasks: $\mathrm{BipedalWalker}$, $\mathrm{LunarLanderContinuous}$, $\mathrm{Hopper}$ and $\mathrm{InvertedDoublePendulum}$. 

Our primary benchmark is PBT, where we use an identical configuration to PB2 aside from the selection of $x_t^b$ (the explore step). We also compare against a random search (RS) baseline \cite{randomsearch}. Random search is a challenging baseline because it does not make assumptions about the underlying problem, and typically achieves close to optimal performance asymptotically \cite{automl_book}. We include a Bayesian Optimization (BO) baseline which uses the Expected Improvement acquisition function. Finally, we also compare our results against a recent state-of-the-art distributed algorithm (ASHA, \cite{asha}). ASHA is one of the most recent distributed methods, shown to outperform PBT  for supervised learning. We believe we are the first to test it for RL.

\begin{table}[h]
    \centering
    \caption{\small{Median best performing agent across $10$ seeds. The best performing algorithms are bolded.}}
    \scalebox{0.9}{
    \begin{tabular}{ccccccc | c }
    \toprule
    & $B$ & RS & BO & ASHA & PBT & PB2 &  vs. PBT \\
    \midrule
    $\mathrm{BipedalWalker}$ & 4 &  234 & 133 & 236 & 223 & \textbf{276} & +24\% \\
    $\mathrm{LunarLanderContinuous}$ & 4 &  161 & 206 & 213 & 159 & \textbf{235} & +48\% \\
    $\mathrm{Hopper}$ & 4 & 1638 & 1760 & 1819 & 1492 & \textbf{2346} & +57\% \\
    $\mathrm{InvertedDoublePendulum}$ & 4 & 8094 & 8607 & 7899 & \textbf{8893} & 8179 & -8\% \\
    \midrule
    $\mathrm{BipedalWalker}$ & 8 & 240 & 237 & 255 & 277 & \textbf{291} & +5\% \\
    $\mathrm{LunarLanderContinuous}$ & 8 &  175 & 240 & 231 & 247 & \textbf{275} & +11\% \\    \bottomrule
\end{tabular}}
\label{table:rl_results}
\end{table}

\begin{figure}[h!]
\vspace{-3mm}
    \begin{minipage}{1\textwidth}
    \centering \subfigure[\textbf{BipedalWalker (4)}]{\includegraphics[width=.3\linewidth]{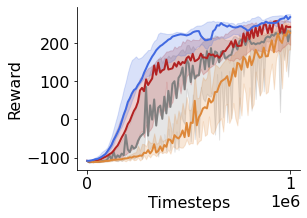}}
    \centering \subfigure[\small\textbf{LLC (4)}]{\includegraphics[width=.3\linewidth]{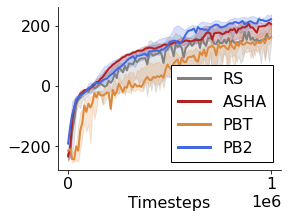}}  
    \centering \subfigure[\textbf{Hopper (4)}]{\includegraphics[width=.3\linewidth]{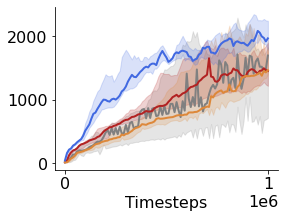}}  
    \centering \subfigure[\textbf{IDP (4)}]{\includegraphics[width=.3\linewidth]{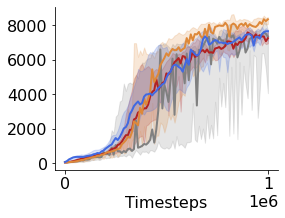}}
    \centering \subfigure[\textbf{BipedalWalker (8)}]{\includegraphics[width=.3\linewidth]{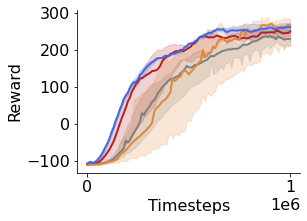}}
    \centering \subfigure[\small\textbf{LLC (8)}]{\includegraphics[width=.3\linewidth]{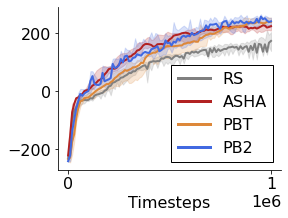}}  
    \caption{\small{Median best performing agent across ten seeds, with IQR shaded. Population size ($B$) is shown in brackets.}}
    \label{figure:rl4}
    \end{minipage}
    \vspace{-8mm}
\end{figure}

For all environments we use a neural network policy with two 32-unit hidden layers and $\mathrm{tanh}$ activations. During training, we optimize the following hyperparameters: batch size, learning rate, GAE parameter ($\lambda$, \cite{gae}) and PPO clip parameter ($\epsilon$). We use the same fixed ranges across all four environments (included in the Appendix Section \ref{appendix:details}). All experiments are conducted for $10^6$ environment timesteps, with the $t_\mathrm{ready}$ command triggered every $5\times10^4$ timesteps. For BO, we train each agent sequentially for $500k$ steps, and selects the best to train for the remaining budget. For ASHA, we initialize a population of $18$ agents to compare against $B=4$ and $48$ agents for $B=8$. These were chosen to achieve the same total budget with the grace period equal to the $t_\mathrm{ready}$ criteria for PBT and PB2. Given the typically noisy evaluation of policy gradient algorithms \cite{deeprlmatters} we repeat each experiment with ten seeds. We show the median best reward achieved from each run in Table \ref{table:rl_results} and plot the median best performing agent from each run, with the interquartile ranges (IQRs) in Fig. \ref{figure:rl4}. 

In almost all cases we see performance gains from PB2 vs. PBT. In fact, $3/4$ of cases PBT actually underperforms random search with the smaller population size ($B=4$), demonstrating its reliance on large computational resources. This is confirmed in the original PBT paper, where the smallest population size fails to outperform (see Table 1, \cite{pbt}). One possible explanation for this is the greediness of PBT leads to prematurely abandoning promising regions of the search space. Another is that the small changes in parameters (multiple of $0.8$ or $1.2$) is mis-specified for discovering the optimal regions, thus requiring more initial samples to sufficiently span the space. This may also be the case if there is a shift later in the optimization process. Interestingly, PBT does perform well for $\mathrm{InvertedDoublePendulum}$, this may be explained by the relative simplicity of the problem. We see that BO also performs well here, confirming that it may not require the same degree of adaptation during training as the other tasks (such as $\mathrm{BipedalWalker}$). 

For the larger population size we confirm the effectiveness of PBT, which outperforms ASHA and Random Search. However PB2 outperforms PBT by $+5\%$ and $+11\%$ for the $\mathrm{BipedalWalker}$ and $\mathrm{LunarLanderContinuous}$ environments respectively. This shows that PB2 is able to scale to larger computational budgets. 

Interestingly, the state-of-the-art supervised learning performance of ASHA fails to translate to RL, where it clearly performs worse than both PBT and PB2 for the larger setting, and performs worse than PB2 for the smaller one.

\begin{wrapfigure}{r}{0.33\textwidth}
        \vspace{-6mm}
        \subfigure{\includegraphics[width=0.99\linewidth]{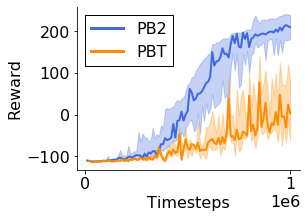}} 
        \vspace{-5mm}
        \caption{\footnotesize{Median curves, IQR shaded.}}
        \label{figure:batch}
\end{wrapfigure}

\textbf{Robustness to Hyperparameter Ranges} One key weakness of PBT is a reliance on a large population size to explore the hyperparameter space. This problem can be magnified if the hyperparameter range is mis-specified or unknown (in a sense, the bounds placed on the hyperparameters may require tuning). PB2 avoids this issue, since it is able to select a point anywhere in the range, so does not rely on random sampling or gradual movements to get to optimal regions. We evaluate this by re-running the $\mathrm{BipedalWalker}$ task with a batch size drawn from $\{5000, 200000\}$. This means many agents are initialized in a very inefficient way, since when an agent has a batch size of $200,000$ it is using $20\%$ of total training samples for a single gradient step. As we see in Fig. \ref{figure:batch}  the performance for PBT is significantly reduced, while PB2 is still able to learn good policies. While both methods perform worse than in Table \ref{table:rl_results}, PB2 still achieves a median best of $203$, vs. $-12$ for PBT. This is a critical issue, since for new problems we will not know the optimal hyperparameter ranges ex-ante, and thus require a method which is capable of learning without this knowledge.

\begin{wrapfigure}{r}{0.35\textwidth}
        \vspace{-4mm}
        \subfigure{\includegraphics[width=0.85\linewidth]{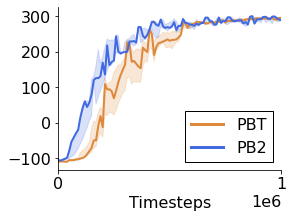}} 
        \vspace{-5mm}
        \caption{\footnotesize{Median curves, IQR shaded.}}
        \vspace{-5mm}
        \label{fig:b16}
\end{wrapfigure}

\textbf{Scaling to Larger Populations.} For PB2 to be broadly useful, it needs the ability to scale when more resources are available. To test this we repeated the $\mathrm{BipedalWalker}$ experiment with $B=16$. As we see in Figure \ref{fig:b16}, both PBT and PB2 achieve optimal rewards ($>300$), but PB2 is still more efficient. This is a promising initial result, although of course it will be interesting to test even larger settings in the future.

\subsection{Off Policy Reinforcement Learning}

We now evaluate PB2 in a larger setting, optimizing hyperparameters for IMPALA \cite{impala}. in the $\mathrm{breakout}$ and $\mathrm{Space Invaders}$ environments from the Arcade Learning Environment \cite{ale}. In the original IMPALA paper, the best results come with the use of PBT with a population size of $B=24$. Here we optimize the same three hyperparameters as in the original paper, but with a much smaller population ($B=4$), making it essential to efficiently explore the hyperparameter space.  

We train for $10$ million timesteps, equivalent to $40$ million frames, and set $t_\mathrm{ready}$ to $5\times10^5$ timesteps. We repeat each experiment for $7$ random seeds. PB2 achieves performance which is comparable with the hand-tuned performance reported in the rllib  implementation,\footnote{See ``RLlib IMPALA 32-workers''  here: \url{https://github.com/ray-project/rl-experiments}} while PBT underperforms, particularly in the $\mathrm{Space Invaders}$ environment. 

\begin{figure}[h]
    \begin{minipage}{1\textwidth}
        \subfigure[Breakout]{\includegraphics[width=0.29\linewidth]{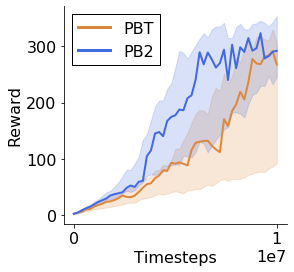}} 
        \subfigure{\includegraphics[width=0.19\linewidth]{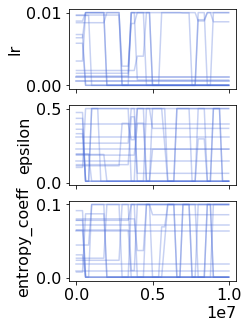}} 
        \subfigure[Space Invaders]{\includegraphics[width=0.29\linewidth]{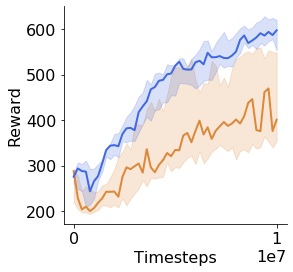}} 
        \subfigure{\includegraphics[width=0.19\linewidth]{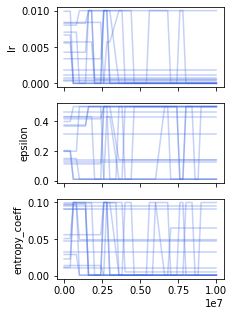}} 

    \caption{\small{In both (a) and (b) we show training performance on the left, with median curves for seven seeds and inter-quartile range shaded. On the right, we show all agent configurations found by PB2.}}
    \label{figure:atari}
    \end{minipage}
\end{figure}

For each environment in Fig. \ref{figure:atari} we include the hyperparameters used for all agents across all seeds of training. PB2 effectively explores the entire range of parameters, which enables it to find optimal configurations even with a small number of trials. More details are in the Appendix, Section \ref{appendix:details}.

\section{Conclusion}
\label{sec:conclusion}

We introduced Population-Based Bandits (PB2), the first PBT-style algorithm with sublinear regret guarantees. PB2 replaces the heuristics from the original PBT algorithm with theoretically guided GP-bandit optimization. This allows it to balance exploration and exploitation in a principled manner, preventing mode collapse to suboptimal regions of the hyperparameter space and making it possible to find high performing configurations with a small computational budget. Our algorithm complements the existing approaches for optimizing hyperparameters for deep learning frameworks. We believe the gains for reinforcement learning will be particularly useful, given the number of hyperparameters present, and the difficulty in optimizing them with existing techniques.  

Finally, we believe there are several future directions opened by taking our approach, such as updating population sizes based on the value of the acquisition function, and extending the search space to select the optimization algorithms or neural network architectures with BO. We also believe there may be further gains in reinforcement learning experiments through making use of off-policy data \cite{hoof}. We leave these to exciting future work.

\section*{Disclosure of Funding}

Nothing to declare.

\section*{Broader Impact}
Population Based Training (PBT) has become a prominent algorithm in machine learning research, leading to gains in reinforcement learning (e.g. IMPALA \cite{impala}) and industrial applications (e.g. \href{https://deepmind.com/blog/article/how-evolutionary-selection-can-train-more-capable-self-driving-cars}{Waymo}). As such, we believe the gains provided by PB2 will have a significant impact. We believe this work will allow labs with small to medium sized computational resources to gain the benefit of population-based training without the excessive computational cost required to ensure sufficient exploration. This should be particularly helpful for achieving competitive performance in reinforcement learning experiments. To aid this, our implementation is integrated with the widely used ray library \cite{liaw2018tune}. 

\bibliographystyle{abbrv}
\bibliography{refs}

\newpage

\section*{Appendix}

\section{Additional Experiments}
\label{sec:additional_exps}

\textbf{Supervised Learning} While the primary motivation for our work is RL, we also evaluated PB2 the supervised learning case, to test the generality of the method. Concretely, we used PB2 to optimize six hyperparameters for a Convolutional Neural Network (CNN) on the CIFAR-10 dataset \citep{cifar10}. In each setting we randomly sample the initial hyperparameter configurations and train on half of the dataset for $50$ epochs. We use $B=4$ agents for RS, PBT and PB2, with $t_\mathrm{ready}$ as $5$ epochs. For ASHA we have the same maximum budget across all agents but begin with a population size of $16$.

\begin{table}[h]
    \centering
    \caption{\small{Median best performing agent across $5$ seeds. The best performing methods are bolded.}}
    \scalebox{0.9}{
    \begin{tabular}{l*{5}{c}r}
    \toprule
     & RS & ASHA & PBT & PB2  \\
    \midrule
    Test Accuracy & $84.43$ & $88.85$ & $87.20$ & $\mathbf{89.10}$ \\
    \bottomrule
\end{tabular}}
\end{table}
\label{table:sl_results}

In Table \ref{table:sl_results} we present the median best performing agent from five runs of each algorithm. We see that PB2 outperforms all other methods, including ASHA, which was specifically designed for SL problems \cite{asha}. This result indicates PB2 may be useful for a vast array of applications.

\section{Experiment Details}
\label{appendix:details}

For all experiments we set $\beta_t = c_1 + \text{log}(c_2t)$ with $c_1 = 0.2$ and $c_2 = 0.4$, as in the traffic speed data experiment from \cite{bogunovic2016time}. 

In Table \ref{table:impala_fixed}, \ref{table:ppo_fixed} \& \ref{table:cifar_fixed} and we show hyperparameters for the IMPALA, PPO and CIFAR experiments. In Table \ref{table:impala_learned}, \ref{table:ppo_ranges} and \ref{table:ppo_ranges} we show the bounds for the hyperparameters learned by PBT and PB2. All methods were initialized by randomly sampling from these bounds.

\begin{table}[h!]
    \centering
    \begin{minipage}{.45\linewidth}
        \caption{\footnotesize{IMPALA: Fixed}}
        \label{table:impala_fixed}
        \centering
        \scalebox{0.85}{
        \begin{tabular}{l*{2}{c}r}
        \toprule
        \textbf{Parameter} & \textbf{Value}  \\
        \midrule
        Num Workers & 5  \\
        Num GPUs & 0  \\
        \bottomrule
        \\
    \end{tabular}}
    \end{minipage}
    \begin{minipage}{.45\linewidth}
    \caption{\footnotesize{IMPALA: Learned}}
    \label{table:impala_learned}
    \centering
    \scalebox{0.85}{
    \begin{tabular}{l*{2}{c}r}
    \toprule
    \textbf{Parameter} & \textbf{Value}  \\
    \midrule
    Epsilon & $\{0.01, 0.5\}$  \\
    Learning Rate & $\{10^{-3}, 10^{-5}\}$  \\
    Entropy Coeff & $\{0.001, 0.1\}$ \\
    \bottomrule
    \\
    \end{tabular}}
    \end{minipage}
\label{table:impala_learned}
\end{table}

\begin{table}[h!]
    \centering
    \begin{minipage}{.45\linewidth}
        \caption{\footnotesize{PPO: Fixed}}
        \label{table:ppo_fixed}
        \centering
        \scalebox{0.85}{
        \begin{tabular}{l*{2}{c}r}
        \toprule
        \textbf{Parameter} & \textbf{Value}  \\
        \midrule
        Filter & $MeanStdFilter$  \\
        SGD Iterations & 10  \\
        Architecture & 32-32 \\
        $\mathrm{ready}$ & $5 \times 10^4$  \\
        \bottomrule
        \\
    \end{tabular}}
    \end{minipage}
    \begin{minipage}{.45\linewidth}
    \caption{\footnotesize{PPO: Learned}}
    \label{table:ppo_ranges}
    \centering
    \scalebox{0.85}{
    \begin{tabular}{l*{2}{c}r}
    \toprule
    \textbf{Parameter} & \textbf{Value}  \\
    \midrule
    Batch Size & $\{1000, 60000\}$  \\
    GAE $\lambda$ & $\{0.9, 0.99\}$  \\
    PPO Clip $\epsilon$ & $\{0.1, 0.5\}$ \\
    Learning Rate $\eta$ & $\{10^{-3}, 10^{-5}\}$  \\
    \bottomrule
    \\
    \end{tabular}}
    \end{minipage}
\end{table}

The model used for the CIFAR dataset was from: \url{https://zhenye-na.github.io/2018/09/28/pytorch-cnn-cifar10.html}. All experiments were run using a $32$ core machine. 

\begin{table}[h!]
    \centering
    \begin{minipage}{.45\linewidth}
        \caption{\footnotesize{CIFAR: Fixed}}
        \label{table:cifar_fixed}
        \centering
        \scalebox{0.85}{
        \begin{tabular}{l*{2}{c}r}
        \toprule
        \textbf{Parameter} & \textbf{Value}  \\
        \midrule
        Optimizer & $\mathrm{Adam}$  \\
        Iterations & 50  \\
        Architecture & 3 Conv Layers \\
        $\mathrm{ready}$ & $5$  \\
        \bottomrule
        \\
    \end{tabular}}
    \end{minipage}
    \begin{minipage}{.45\linewidth}
    \caption{\footnotesize{CIFAR: Learned}}
    \centering
    \scalebox{0.85}{
    \begin{tabular}{l*{2}{c}r}
    \toprule
    \textbf{Parameter} & \textbf{Value}  \\
    \midrule
    Train Batch Size & $\{4, 128\}$  \\
    Dropout-1 & $\{0.1, 0.5\}$  \\
    Dropout-2 & $\{0.1, 0.5\}$  \\
    Learning Rate & $\{10^{-3}, 10^{-4}\}$  \\
    Weight Decay & $\{10^{-3}, 10^{-5}\}$  \\
    Momentum & $\{0.8, 0.99\}$  \\
    \bottomrule
    \\
    \end{tabular}}
    \end{minipage}
\label{table:cifar_learned}
\end{table}

\section{Theoretical Results}
\label{sec:theory}

We show the derivation for Lemma \ref{lem:maxreward_minregret}. 
\begin{proof}
We have a reward at the starting iteration $F_1(x_1)$ as a constant that allows us to write the objective function as:
\begin{align}
    F_T(x_T) - F_1(x_1) &=  F_T(x_T) - F_{T-1}(x_{T-1})+  \dots+ F_{3}(x_{3}) - F_{2}(x_{2}) + F_2(x_2) - F_1(x_1) 
    \label{eq:equivalent_reward_regret}
\end{align}

Therefore, maximizing the left of Eq. (\ref{eq:equivalent_reward_regret}) is equivalent to minimizing the cummulative regret as follows:
\begin{align}
    \max \left[ F_T(x_T) - F_1(x_1) \right] &=  \max \sum_{t=1}^T F_t(x_t) - F_{t-1}(x_{t-1}) = \max \sum_{t=1}^T f_t(x_t) = \min \sum_{t=1}^T r_t(x_t)\nonumber
\end{align}
where we define $f_t(x_t)=F_t(x_t)-F_{t-1}(x_{t-1})$, the regret $r_t=f_t(x^*_t)-f_t(x_t)$ and $f_t(x_t^*):= \max_{\forall x} f_t(x)$ is an unknown constant.
\end{proof}

\subsection{Convergence Analysis}

We minimize the cumulative regret $R_{T}$  by sequentially
suggesting an $\bx_{t}$ to be used in each iteration $t$. We shall derive
the upper bound in the cumulative regret and show that it asymptotically
goes to zero as $T$ increases, i.e., $\lim_{T\rightarrow\infty}\frac{R_{T}}{T}=0$. We make the following smoothness assumption to derive the regret bound
of the proposed algorithm.

\textbf{Assumptions.} We will assume that the kernel $k$ holds for some $\left(a,b\right)$ and
$\forall L_t\ge0$. The joint kernel satisfies for all dimensions $j=1,...,d,$
\begin{align}
\forall L_t & \ge0,t\le\mathcal{T},p(\sup\left|\frac{\partial f_{t}(\bx)}{\partial\bx^{(j)}}\right|\ge L_t)\le ae^{-\left(L_t/b\right)^{2}}.\label{eq:Lipschitz_assumption}
\end{align}
These assumptions are achieved by using a time-varying kernel $k_{time}(t,t')=\left(1-\omega\right)^{\frac{|t-t'|}{2}}$
\cite{bogunovic2016time} with the smooth functions \cite{bogunovic2016time}. For completeness, we restate Lemma \ref{lem:Lipschitz_bound}, \ref{lem:bound_discretization}, \ref{lem:bound_fx_mux}, \ref{lem:bound_MIG_timevarying} from \cite{gp_ucb, bogunovic2016time}, then present our new theoretical results in Lemma  \ref{lem:bound_UCB_US}, \ref{lem:The-sum-variance_UCB}, \ref{lem:bound_sum_sigma} and Theorem \ref{main_theorem}.

\begin{lemma} [\cite{gp_ucb}]
\label{lem:Lipschitz_bound}Let $L_{t}=b\sqrt{\log3da\frac{\pi_{t}}{\delta}})$
where $\sum_{t=1}^{T}\frac{1}{\pi_{t}}=1$, we have with probability
$1-\frac{\delta}{3},$
\begin{align}
\left|f_{t}(\bx)-f_{t}(\bx')\right| & \le L_{t}||\bx-\bx'||_{1},\forall t,\bx,\bx'\in D.\label{eq:Lipschitz_Lt}
\end{align}
\end{lemma}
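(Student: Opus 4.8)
The plan is to combine a deterministic Lipschitz estimate from the mean value theorem with a high-probability bound on the partial derivatives supplied by the tail assumption in Eq.~(\ref{eq:Lipschitz_assumption}). The structural facts I would exploit are that $\mathcal{D}$ is convex, so line segments between any two points remain in the domain, and that $f_t$ is almost surely continuously differentiable; both are granted by the hypotheses of the main theorem.

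First I would establish the deterministic step. Fix $t$ and suppose that $\sup_{\bz\in\mathcal{D}}\left|\partial f_t(\bz)/\partial\bz^{(j)}\right|\le L_t$ simultaneously for every coordinate $j=1,\dots,d$. For any $\bx,\bx'\in\mathcal{D}$, convexity lets me parametrize the segment $\bz(s)=\bx'+s(\bx-\bx')$ for $s\in[0,1]$ and write $f_t(\bx)-f_t(\bx')=\int_0^1\nabla f_t(\bz(s))^{\top}(\bx-\bx')\,ds$. Bounding each summand by the corresponding coordinate supremum yields $\left|f_t(\bx)-f_t(\bx')\right|\le\sum_{j=1}^d L_t\,|x^{(j)}-x'^{(j)}|=L_t\|\bx-\bx'\|_1$, which is exactly the claimed inequality. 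Crucially this holds for all $\bx,\bx'$ at once, so only the event on the derivative suprema needs to be controlled probabilistically.

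Next I would control that event via a union bound. The Lipschitz inequality can fail at time $t$ only if some coordinate derivative supremum exceeds its threshold, so the failure probability is at most $\sum_{t=1}^T\sum_{j=1}^d p\!\left(\sup\left|\partial f_t/\partial\bx^{(j)}\right|\ge L_t\right)\le\sum_{t=1}^T d\,a\,e^{-(L_t/b)^2}$ by Eq.~(\ref{eq:Lipschitz_assumption}). Substituting the choice $L_t=b\sqrt{\log\left(3da\pi_t/\delta\right)}$ collapses each exponential to $e^{-(L_t/b)^2}=\delta/(3da\pi_t)$, so the sum reduces to $\tfrac{\delta}{3}\sum_{t=1}^T\pi_t^{-1}=\tfrac{\delta}{3}$, using the normalization $\sum_t\pi_t^{-1}=1$. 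Taking complements gives the stated probability $1-\delta/3$.

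I anticipate the only delicate point is the interchange between the almost-sure smoothness of the GP sample paths and the supremum over the continuum $\mathcal{D}$: the tail assumption is phrased in terms of $\sup_{\bz}\left|\partial f_t/\partial\bz^{(j)}\right|$, so I would note that $\mathcal{D}$ compact together with $C^1$ sample paths ensures this supremum is attained and measurable, making the union bound legitimate. The remainder is the mechanical substitution above, and since this lemma is quoted from \cite{gp_ucb} I would defer to their argument for these measurability technicalities.
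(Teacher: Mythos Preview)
Your argument is correct and is precisely the standard proof from \cite{gp_ucb}: the mean-value/convexity step reduces the Lipschitz inequality to a uniform bound on the coordinate partial derivatives, and the union bound over $j=1,\dots,d$ and $t=1,\dots,T$ together with the tail assumption \eqref{eq:Lipschitz_assumption} and the choice of $L_t$ yields the failure probability $\delta/3$. The paper itself does not supply a proof of this lemma; it merely restates it from \cite{gp_ucb} (see the sentence preceding Lemma~\ref{lem:Lipschitz_bound}), so there is nothing further to compare against.
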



\begin{lemma} [\cite{gp_ucb}]
\label{lem:bound_discretization}We define a discretization $D_{t}\subset D\subseteq[0,r]^{d}$
of size $(\tau_{t})^{d}$ satisfying $||\bx-\left[\bx\right]_{t}||_{1}\le\frac{d}{\tau_{t}},\forall\bx\in D$
where $\left[\bx\right]_{t}$ denotes the closest point in $D_{t}$
to $\bx$. By choosing $\tau_{t}=\frac{t^{2}}{L_{t}d}=rdbt^{2}\sqrt{\log\left(3da\pi_{t}/\delta\right)}$,
we have 
\begin{align*}
|f_{t}(\bx)-f_{t}(\left[\bx\right]_{t})| & \le\frac{1}{t^{2}}.
\end{align*}
\end{lemma}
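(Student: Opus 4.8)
The plan is to obtain this discretization bound as an immediate corollary of the Lipschitz continuity established in Lemma \ref{lem:Lipschitz_bound}, combined with an explicit construction of the grid $D_t$. First I would fix the high-probability event of Lemma \ref{lem:Lipschitz_bound}: with probability at least $1-\frac{\delta}{3}$ the inequality $|f_t(\bx)-f_t(\bx')|\le L_t\|\bx-\bx'\|_1$ holds simultaneously for all $t$ and all $\bx,\bx'\in D$, where $L_t=b\sqrt{\log(3da\pi_t/\delta)}$ and the weights satisfy $\sum_t \frac{1}{\pi_t}=1$. Every statement below is conditioned on this event, so the conclusion inherits the same probability and no further union bound is needed here.

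Next I would make the discretization explicit. Since $D\subseteq[0,r]^{d}$, I would take $D_t$ to be the uniform grid placing $\tau_t$ equally spaced points along each of the $d$ coordinate axes, which gives $|D_t|=(\tau_t)^{d}$ as claimed. For any $\bx\in D$ and its nearest grid point $[\bx]_t$, each coordinate differs by at most the per-axis mesh width, and summing the $d$ coordinate-wise discrepancies yields the mesh-size bound $\|\bx-[\bx]_t\|_1 \le \frac{rd}{\tau_t}$; I would carry the cube side length $r$ through explicitly, since it is exactly this factor that later cancels the $r$ inside $\tau_t$.

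I would then chain the two facts directly. Setting $\bx'=[\bx]_t$ in the Lipschitz inequality and substituting the mesh bound gives
\[
|f_t(\bx)-f_t([\bx]_t)| \le L_t\,\|\bx-[\bx]_t\|_1 \le \frac{L_t\, r d}{\tau_t}.
\]
The stated choice $\tau_t = rdb\,t^{2}\sqrt{\log(3da\pi_t/\delta)} = L_t\, r d\, t^{2}$ is precisely what makes the right-hand side telescope, so $\frac{L_t r d}{\tau_t} = \frac{1}{t^{2}}$, which is the claim.

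There is no deep obstacle: the argument is mechanical once Lemma \ref{lem:Lipschitz_bound} is available. The only point requiring care is the bookkeeping of constants in $\tau_t$. In particular I would reconcile the two expressions written for $\tau_t$, verifying that the operative identity is $\tau_t = L_t\, r d\, t^{2}$ (so that the product of the Lipschitz constant $L_t$ and the mesh width $\frac{rd}{\tau_t}$ cancels cleanly to $t^{-2}$), and that the mesh bound genuinely carries the side length $r$. I would also flag that the $t^{-2}$ decay rate is not incidental: it is chosen so that summing the discretization errors across iterations contributes only a bounded constant, $\sum_{t\ge1} t^{-2}=\pi^{2}/6$, to the downstream cumulative-regret analysis built on this lemma.
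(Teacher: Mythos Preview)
The paper does not actually supply a proof of this lemma: it is restated verbatim from \cite{gp_ucb} without argument (only Lemma~\ref{lem:bound_fx_mux} is proved in the appendix). Your proposal is the standard Srinivas-et-al.\ argument and is correct: condition on the Lipschitz event of Lemma~\ref{lem:Lipschitz_bound}, bound the $\ell_1$ distance to the nearest grid point by the mesh width times $d$, and choose $\tau_t$ so that the product $L_t \cdot (\text{mesh}) = t^{-2}$. You also correctly flag the inconsistency between the two displayed expressions for $\tau_t$ in the statement (the operative one is $\tau_t = L_t\, r d\, t^{2}$, matching the second expression; the first expression $t^{2}/(L_t d)$ and the mesh bound $d/\tau_t$ written without the factor $r$ are typos inherited from the restatement). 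Nothing is missing.
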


\begin{lemma}
[\cite{gp_ucb}]
\label{lem:bound_fx_mux}Let $\beta_{t}\ge2\log\frac{3\pi_{t}}{\delta}+2d\log\left(rdbt^{2}\sqrt{\log\frac{3da\pi_{t}}{2\delta}}\right)$
where $\sum_{t=1}^{T}\pi_{t}^{-1}=1$, then with probability at least
$1-\frac{\delta}{3}$, we have
\begin{align*}
|f_{t}(\bx_{t})-\mu_{t}(\bx_{t})|\le & \sqrt{\beta_{t}}\sigma_{t}(\bx_{t}),\forall t,\forall\bx\in\mathcal{D}.
\end{align*}
\end{lemma}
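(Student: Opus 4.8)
\textbf{Proof Proposal for Lemma \ref{lem:bound_fx_mux}.}

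The plan is to establish the pointwise confidence bound $|f_t(\bx_t)-\mu_t(\bx_t)|\le\sqrt{\beta_t}\sigma_t(\bx_t)$ by combining a standard Gaussian concentration argument over a finite discretization with the Lipschitz-based discretization control already provided in Lemmas \ref{lem:Lipschitz_bound} and \ref{lem:bound_discretization}. The core observation is that, conditioned on the past observations, the GP posterior makes $f_t(\bx)$ Gaussian with mean $\mu_t(\bx)$ and variance $\sigma_t^2(\bx)$, so for any \emph{fixed} point $\bx$ the standardized deviation $(f_t(\bx)-\mu_t(\bx))/\sigma_t(\bx)$ is a standard normal. First I would invoke the Gaussian tail bound: if $r\sim\mathcal{N}(0,1)$ then $p(|r|>c)\le e^{-c^2/2}$. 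Setting $c=\sqrt{\beta_t}$ gives, for each fixed $\bx$ and each $t$, a failure probability at most $e^{-\beta_t/2}$.

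Next I would apply a union bound over a suitably chosen finite set. Because the domain $\mathcal{D}\subset[0,r]^d$ is continuous, we cannot union-bound over all of $\mathcal{D}$ directly; instead we invoke the discretization $D_t$ from Lemma \ref{lem:bound_discretization}, which has size $(\tau_t)^d$ with $\tau_t=rdbt^2\sqrt{\log(3da\pi_t/\delta)}$. Taking a union bound over the $(\tau_t)^d$ points of $D_t$ and over all $t$ (weighted by $\pi_t^{-1}$ with $\sum_t\pi_t^{-1}=1$), we need $(\tau_t)^d\,e^{-\beta_t/2}\le\frac{\delta}{3\pi_t}$ to hold, which upon taking logarithms and rearranging is exactly guaranteed by the stated choice $\beta_t\ge2\log\frac{3\pi_t}{\delta}+2d\log\bigl(rdbt^2\sqrt{\log\frac{3da\pi_t}{2\delta}}\bigr)$. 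This yields, with probability at least $1-\frac{\delta}{3}$, the bound $|f_t(\bx)-\mu_t(\bx)|\le\sqrt{\beta_t}\sigma_t(\bx)$ simultaneously for all $t$ and all $\bx\in D_t$.

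Finally I would bridge from the discretization $D_t$ back to the full continuous domain $\mathcal{D}$. The selected point $\bx_t$ need not lie in $D_t$, so I would control the gap between $\bx_t$ and its nearest discretized neighbor $[\bx_t]_t$. By Lemma \ref{lem:bound_discretization} we have $|f_t(\bx_t)-f_t([\bx_t]_t)|\le t^{-2}$, and an analogous Lipschitz argument (or the posterior-mean continuity) bounds $|\mu_t(\bx_t)-\mu_t([\bx_t]_t)|$ by a comparable $t^{-2}$-order term. Absorbing these lower-order discretization errors into the confidence width (they are dominated by $\sqrt{\beta_t}\sigma_t$ for the given $\beta_t$) recovers the claimed inequality on all of $\mathcal{D}$. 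The main obstacle I anticipate is the careful bookkeeping in the union bound: ensuring the $(\tau_t)^d$ factor is correctly converted into the $2d\log(\cdot)$ term of $\beta_t$ and that the per-$t$ allocation $\delta/(3\pi_t)$ sums correctly to $\delta/3$, while simultaneously verifying that the discretization residuals are genuinely negligible relative to the confidence term rather than merely small in absolute terms.
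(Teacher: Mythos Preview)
Your core argument matches the paper's exactly: Gaussian tail bound at a fixed point, union bound over the discretization $D_t$ of size $(\tau_t)^d$, and union bound over $t$ via the allocation $\delta/(3\pi_t)$, which together force the stated $\beta_t$.

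The one divergence is your final bridging paragraph. The paper does \emph{not} extend the confidence bound from $D_t$ to all of $\mathcal{D}$ within this lemma; its proof stops at the discretization, and the $1/t^2$ residual from Lemma~\ref{lem:bound_discretization} is instead carried forward as an \emph{additive} term in the downstream regret analysis (the $+1/t^{2}$ in Eq.~(\ref{eq:bound_rt}), later summed to $\sum_t 1/t^2\le 2$). Your proposed absorption of the $t^{-2}$ error into $\sqrt{\beta_t}\sigma_t(\bx)$ would require a uniform lower bound on $\sigma_t$, which is not available---precisely the obstacle you flagged---so that step does not close as written. If you simply drop the absorption and let the discretization residual surface later as an additive $1/t^2$, your argument coincides with the paper's.
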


\begin{proof}
We note that conditioned on the outputs $(y_{1},...,y_{t-1})$, the
sampled points $(\bx_{1},...,\bx_{t})$ are deterministic, and $f_{t}(\bx_{t})\sim\mathcal{N}\left(\mu_{t}(\bx_{t}),\sigma_{t}^{2}(\bx_{t})\right)$.
Using Gaussian tail bounds \cite{wainwright2015basic}, a random
variable $f\sim\mathcal{N}(\mu,\sigma^{2})$ is within $\sqrt{\beta}\sigma$
of $\mu$ with probability at least $1-\exp\left(-\frac{\beta}{2}\right)$.
 Therefore, we first claim that if $\beta_{t}\ge2\log\frac{3\pi_{t}}{\delta}$
then the selected points $\left\{ \bx_{t}\right\} _{t=1}^{T}$ satisfy
the confidence bounds with probability at least $1-\frac{\delta}{3}$
\begin{align}
\left|f_{t}(\bx_{t})-\mu_{t}(\bx_{t})\right| & \le\sqrt{\beta_{t}}\sigma_{t}(\bx_{t}),\forall t.\label{eq:bound_selected_xt}
\end{align}

This is true because the confidence bound for individual $\bx_{t}$
will hold with probability at least $1-\frac{\delta}{3\pi_{t}}$ and
taking union bound over $\forall t$ will lead to $1-\frac{\delta}{3}$.

We show above that the bound is applicable for the selected points
$\left\{ \bx_{t}\right\} _{t=1}^{T}$. To ensure that the bound is
applicable for all points in the domain $D_{t}$ and $\forall t$,
we can set $\beta_{t}\ge2\log\frac{3|D_{t}|\pi_{t}}{\delta}$ where
$\sum_{t=1}^{T}\pi_{t}^{-1}=1$ e.g., $\pi_{t}=\frac{\pi^{2}t^{2}}{6}$
\begin{align}
p(\left|f_{t}\left(\bx_{t}\right)-\mu_{t}\left(\bx_{t}\right)\right| & \le\sqrt{\beta_{t}}\sigma_{t}\left(\bx_{t}\right)\ge1-|D_{t}|\sum_{t=1}^{T}\exp\left(-\beta_{t}/2\right)=1-\frac{\delta}{3.}.\label{eq:bound_all_Dt}
\end{align}

By discretizing the domain $D_{t}$ in Lem. \ref{lem:bound_discretization},
we have a connection to the cardinality of the domain that $|D_{t}|=(\tau_{t})^{d}=\left(rdbt^{2}\sqrt{\log\left(3da\pi_{t}/\delta\right)}\right)^{d}$.
 Therefore, we need to set $\beta_{t}$ such that both conditions
in Eq. (\ref{eq:bound_selected_xt}) and Eq. (\ref{eq:bound_all_Dt})
are satisfied. We simply take a sum of them and get $\beta_{t}\ge2\log\frac{3\pi_{t}}{\delta}+2d\log\left(rdbt^{2}\sqrt{\log\frac{3da\pi_{t}}{2\delta}}\right)$.
\end{proof}


We use $TB$ to denote the batch setting where we will run the algorithm over $T$ iterations with a batch size $B$. The mutual information is defined
as $\tilde{\idenmat}(f_{TB};y_{TB})=\frac{1}{2}\log\det\left(\idenmat_{TB}+\sigma_{f}^{-2}\tilde{K}_{TB}\right)$
and the maximum information gain is as $\tilde{\gamma}_{T}:=\max\tilde{\idenmat}({\bf f}_{TB};\by_{TB})$
where ${\bf f}_{TB}:=f_{TB}(\bx_{TB})=\left(f_{t,b}(\bx_{t,b}),...,f_{T,B}(\bx_{T,B})\right),\forall b=1....B,\forall t=t...T$
for the time variant GP $f$. Using the result presented in \citep{bogunovic2016time}, we can adapt the bound on the time-varying information gain into the parallel setting using a population size of $B$ below. 

\begin{lemma}
\label{lem:bound_MIG_timevarying} (adapted from \cite{bogunovic2016time} with a batch size $B$) Let $\omega$ be the forgetting-remembering
trade-off parameter and consider the kernel for time $1-K_{time}(t,t')\le\omega\left|t-t'\right|$,
we bound the maximum information gain that
\begin{align*}
\tilde{\gamma}_{TB} & \le\left(\frac{T}{\tilde{N}\times B}+1\right)\left(\gamma_{\tilde{N}\times B}+\sigma_{f}^{-2}\left[\tilde{N}\times B\right]^{3}\omega\right).
\end{align*}
\end{lemma}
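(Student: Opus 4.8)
The plan is to re-run the block-decomposition argument of \cite{bogunovic2016time} verbatim, but with each iteration now contributing a \emph{group} of $B$ evaluation points that all share a single time index. Write the batch information gain as $\tilde{\idenmat}(f_{TB};y_{TB})=\Phi(\tilde K_{TB})$, where I set $\Phi(M):=\tfrac12\log\det(\idenmat+\sigma_f^{-2}M)$ and $\tilde K_{TB}=K_{TB}\circ K^{\mathrm{time}}_{TB}$ is the Hadamard product of the spatial kernel matrix with the time kernel, whose $(p,q)$ entry equals $(1-\omega)^{|t_p-t_q|/2}$ for the iteration indices $t_p,t_q$ of the two points (hence exactly $1$ for two points from the same batch). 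Since $\tilde\gamma_{TB}=\max\Phi(\tilde K_{TB})$ over point configurations, it suffices to bound $\Phi(\tilde K_{TB})$ for an arbitrary set of at most $TB$ points.

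First I would partition the horizon into contiguous blocks, grouping consecutive iterations so that each block gathers at most $\tilde{N}B$ points, which produces at most $\frac{T}{\tilde{N}B}+1$ blocks (this is precisely the block-length substitution $\tilde{N}\mapsto\tilde{N}B$ of \cite{bogunovic2016time}, matching the stated prefactor). Applying Fischer's determinant inequality to the positive-definite matrix $\idenmat+\sigma_f^{-2}\tilde K_{TB}$, partitioned conformally with the blocks, shows that its $\log\det$ is at most the sum of the $\log\det$ of its diagonal blocks; equivalently $\Phi$ is \emph{subadditive} over blocks, $\Phi(\tilde K_{TB})\le\sum_{j}\Phi(\tilde K_{B_j})$. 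It therefore remains only to bound the contribution $\Phi(\tilde K_{B_j})$ of a single block of size $\tilde{N}B$ and multiply by the number of blocks.

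Within one block I would replace the time kernel by the all-ones matrix, i.e. approximate $\tilde K_{B_j}=K_{B_j}\circ K^{\mathrm{time}}_{B_j}$ by the purely spatial $\hat K_{B_j}=K_{B_j}$, which satisfies $\Phi(\hat K_{B_j})\le\gamma_{\tilde{N}B}$ by definition of the stationary maximum information gain over $\tilde{N}B$ points. The perturbation $\Delta=\tilde K_{B_j}-\hat K_{B_j}=K_{B_j}\circ(K^{\mathrm{time}}_{B_j}-\mathbf{1}\mathbf{1}^{\mathsf{T}})$ has entries bounded, using the time-Lipschitz assumption $1-K^{\mathrm{time}}(t,t')\le\omega|t-t'|$ and $|K_{B_j}|\le1$, by $\omega|t_p-t_q|\le\omega\tilde{N}B$. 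To turn this into a $\log\det$ bound I would use that $x\mapsto\log(1+\sigma_f^{-2}x)$ is $\sigma_f^{-2}$-Lipschitz together with Mirsky's eigenvalue-perturbation inequality, giving $|\Phi(\tilde K_{B_j})-\Phi(\hat K_{B_j})|\le\tfrac12\sigma_f^{-2}\|\Delta\|_{*}$, and then bound the nuclear norm crudely by $\|\Delta\|_{*}\le(\tilde{N}B)\|\Delta\|_{\mathrm{op}}\le(\tilde{N}B)^{2}\max_{p,q}|\Delta_{pq}|\le(\tilde{N}B)^{3}\omega$. Hence each block contributes at most $\gamma_{\tilde{N}B}+\sigma_f^{-2}(\tilde{N}B)^{3}\omega$, and multiplying by the $\frac{T}{\tilde{N}B}+1$ blocks yields the claim.

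The main obstacle is the within-block perturbation step: the matrix $\mathbf{1}\mathbf{1}^{\mathsf{T}}-K^{\mathrm{time}}_{B_j}$ need not be positive semidefinite, so I cannot simply invoke a monotone PSD determinant bound and must instead control the $\log\det$ difference through a two-sided spectral argument (Mirsky/Lidskii combined with the Lipschitz constant of $\log(1+\sigma_f^{-2}\cdot)$). Some care is also needed to handle the batch structure, where many points share a single time index; I would absorb this into the crude entrywise bound $|t_p-t_q|\le\tilde{N}B$, which is exactly what generates the cubic factor $(\tilde{N}B)^{3}$. Exploiting that within-iteration time differences vanish would only sharpen constants and is unnecessary for the stated bound.
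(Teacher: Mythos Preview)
Your proposal is correct and follows essentially the same strategy as the paper's (drafted) argument: partition the $TB$ evaluations into $\lfloor T/(\tilde N B)\rfloor+1$ blocks, use subadditivity of the Gaussian information gain across blocks (you phrase this as Fischer's inequality, the paper as the chain rule for mutual information, which are equivalent here), and then inside each block control the gap between the time-varying kernel $\tilde K=K\circ K^{\mathrm{time}}$ and the stationary $K$ via Mirsky's eigenvalue perturbation theorem.

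The only substantive difference is in how the per-block perturbation is cashed out. The paper works in Frobenius norm, bounding $\|A\|_F^2\le(\tilde N B)^4\omega^2$, applies Mirsky to get $\sum_k(\Delta_k)^2\le(\tilde N B)^4\omega^2$, and then uses the inequality $\log(1+a+b)\le\log(1+a)+\log(1+b)$ together with Jensen and Cauchy--Schwarz to reach the stated bound (in fact their computation yields the slightly sharper exponent $(\tilde N B)^{5/2}$). You instead use the nuclear-norm form of Mirsky and the $\sigma_f^{-2}$-Lipschitzness of $x\mapsto\log(1+\sigma_f^{-2}x)$ on $[0,\infty)$, which is cleaner: it sidesteps the sign issue in $\log(1+a+b)\le\log(1+a)+\log(1+b)$ (that inequality needs $a,b\ge0$, whereas the eigenvalue shifts $\Delta_k$ can be negative), and directly delivers the cubic factor $(\tilde N B)^3$ of the stated lemma. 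Your crude entrywise bound $|t_p-t_q|\le\tilde N B$ is indeed wasteful (the true range is $\tilde N$), but as you note this only affects constants and matches the paper's level of crudeness.
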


\paragraph{Uncertainty Sampling (US).}

We next derive an upper bound over the maximum information gain obtained
from a batch $\bx_{t,b},\forall b=1,...,B$. In other words, we want
to show that the information gain by our chosen points $\bx_{t,b}$
will not go beyond the ones by maximizing the uncertainty.
For this, we define an uncertainty sampling (US) scheme which fills
in a batch $\bx_{t,b}^{\textrm{US}}$ by maximizing the GP predictive
variance. Particularly, at iteration $t$, we select $\bx_{t,b}^{\textrm{US}}=\arg\max_{\bx}\sigma_{t}(\bx\mid D_{t,b-1}),\forall b\le B$
and the data set is augmented over time to include the information
of the new point, $D_{t,b}=D_{t,b-1}\cup\bx_{t,b}^{\textrm{US}}$.
We note that we use $\bx_{t,b}^{\textrm{US}}$ to derive the upper
bound, but this is not used by our PB2 algorithm.
\begin{lemma}
\label{lem:bound_UCB_US}Let $\bx_{t,b}^{\textrm{PB2}}$ be the point chosen
by our algorithm and $\bx_{t,b}^{\textrm{US}}$ be the point chosen
by uncertainty sampling (US) by maximizing the GP predictive variance
$\bx_{t,b}^{\textrm{US}}=\arg\max_{\bx\in D}\sigma_{t}(\bx\mid D_{t,b-1}),\forall b=1,...B$
and $D_{t,b}=D_{t,b-1}\cup\bx_{t,b}$. We have
\begin{align*}
\sigma_{t+1,1}\left(\bx_{t+1,1}^{\textrm{PB2}}\right)\le\sigma_{t+1,1}\left(\bx_{t+1,1}^{\textrm{US}}\right) & \le\sigma_{t,b}\left(\bx_{t,b}^{\textrm{US}}\right),\forall t\in\left\{ 1,...,T\right\} ,\forall b\in\left\{ 1,...B\right\}.
\end{align*}
\end{lemma}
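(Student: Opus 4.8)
The plan is to prove the chain of two inequalities separately. \textbf{Left inequality.} This should be immediate from the definition of the uncertainty-sampling rule. By construction $\bx_{t+1,1}^{\textrm{US}}=\argmax_{\bx\in D}\sigma_{t+1,1}(\bx\mid D_{t+1,0})$ maximises the posterior standard deviation $\sigma_{t+1,1}(\cdot)$ over the entire domain, and the GP predictive variance in Eq.~(\ref{eqn:sig}) depends only on the queried \emph{locations}, not on the observed outputs $\by$; hence evaluating $\sigma_{t+1,1}$ at PB2's choice $\bx_{t+1,1}^{\textrm{PB2}}\in D$ cannot exceed that maximum, so $\sigma_{t+1,1}(\bx_{t+1,1}^{\textrm{PB2}})\le\max_{\bx\in D}\sigma_{t+1,1}(\bx)=\sigma_{t+1,1}(\bx_{t+1,1}^{\textrm{US}})$.

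\textbf{Right inequality.} Here I would use the monotonicity of the GP posterior variance under conditioning on additional data. The underlying fact, read off the rank-one form of Eq.~(\ref{eqn:sig}), is that for any data set $D$ and any extra point $\bz$ one has $\sigma^{2}(\bx\mid D\cup\{\bz\})=\sigma^{2}(\bx\mid D)-\frac{k(\bx,\bz\mid D)^{2}}{\sigma^{2}+\sigma^{2}(\bz\mid D)}\le\sigma^{2}(\bx\mid D)$ for every $\bx$, i.e. information never raises uncertainty. In the uncertainty-sampling run, moving from slot $(t,b)$ to slot $(t,b+1)$ augments the conditioning set by exactly the single point $\bx_{t,b}^{\textrm{US}}$, so $\sigma_{t,b+1}(\bx)\le\sigma_{t,b}(\bx)$ for all $\bx$; taking the supremum over $\bx$ — attained at the respective US points because US is the arg-max rule — yields $\sigma_{t,b+1}(\bx_{t,b+1}^{\textrm{US}})\le\sigma_{t,b}(\bx_{t,b}^{\textrm{US}})$. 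Iterating this from the given $b$ up to $b=B$, and then taking one further step across the batch boundary — where the conditioning set is again only enlarged, since $D_{t+1,0}=D_{t,B}$ — gives $\sigma_{t+1,1}(\bx_{t+1,1}^{\textrm{US}})\le\sigma_{t,b}(\bx_{t,b}^{\textrm{US}})$ for every $b\in\{1,\dots,B\}$. Combined with the left inequality this is the claim.

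\textbf{Main obstacle.} The delicate point is the step across the batch boundary once the time-varying kernel $\tilde k=k^{\mathrm{SE}}\circ k^{\mathrm{time}}$ enters: advancing the time index from $t$ to $t+1$ decays the correlation of every past observation with the current objective, an effect that in isolation would \emph{increase} the posterior variance, so the bare ``more data never hurts'' reasoning does not literally apply. I would handle this by carrying out the comparison on the time-augmented index set, treating each evaluation as a pair $(\bx,t)$ under the joint covariance $\tilde k((\bx,t),(\bx',t'))$, so that appending a new pair still reduces the variance at every fixed augmented location, and then bounding the residual change that comes purely from shifting the query time; that residual is $O(\omega)$ per step and is exactly what the $[\tilde N B]^{3}\omega$ term in Lemma~\ref{lem:bound_MIG_timevarying} and in the final regret bound is there to absorb. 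Equivalently, one restricts the chain to run within a block of length $\tilde N$, as in the block decomposition used for the information-gain bound. Apart from this point, the argument is the clean two-line combination above, and I do not expect the remaining bookkeeping (matching each $\sigma_{t,b}$ with its conditioning set $D_{t,b-1}$) to pose any difficulty.
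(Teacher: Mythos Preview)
Your two-step argument is exactly the paper's: the left inequality is immediate because $\bx_{t+1,1}^{\textrm{US}}$ is the arg-max of $\sigma_{t+1,1}(\cdot)$, and the right inequality is obtained by invoking the ``information never hurts'' monotonicity of the GP posterior variance (the paper cites Krause and Guestrin for this and states it in one line, without your explicit rank-one formula or the slot-by-slot iteration, but the content is identical).

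Where you differ is in your ``main obstacle'' paragraph. You are more scrupulous than the paper here: the paper's proof simply asserts that ``the uncertainty at the future iteration $\sigma_{t+1}$ will be smaller than that of the current iteration $\sigma_{t}$'' and does not discuss the fact that under the product kernel $\tilde k = k^{\mathrm{SE}}\circ k^{\mathrm{time}}$ the time shift from $t$ to $t+1$ weakens the correlation with all past data and therefore pushes the posterior variance \emph{up}. Your observation that this residual is $O(\omega)$ and is ultimately absorbed by the $[\tilde N B]^3\omega$ slack in Lemma~\ref{lem:bound_MIG_timevarying} is a reasonable reading of how the overall argument is meant to close, but the paper does not make that connection explicit inside the proof of this lemma; it treats the monotonicity as holding outright. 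So your core proof matches the paper's, and your caveat goes beyond what the paper itself records.
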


\begin{proof}
The first inequality is straightforward that the point chosen by uncertainty
sampling will have the highest uncertainty $\sigma_{t+1,1}\left(\bx_{t+1,1}^{\textrm{PB2}}\right)\le\sigma_{t+1,1}\left(\bx_{t+1,1}^{\textrm{US}}\right)=\arg\max_{\bx}\sigma_{t}(\bx\mid D_{t,b-1})$.

The second inequality is obtained by using the principle of \textquotedblleft information
never hurts\textquotedblright{} \cite{krause2008near}, we know that
the GP uncertainty for all locations $\forall\bx$ decreases with
observing a new point. Therefore, the uncertainty at the future iteration
$\sigma_{t+1}$ will be smaller than that of the current
iteration $\sigma_{t}$, i.e., $\sigma_{t+1,b}\left(\bx_{t+1,b}^{\textrm{US}}\right)\le\sigma_{t,b}\left(\bx_{t,b}^{\textrm{US}}\right),\forall b\le B,\forall t\le T$.
We thus conclude the proof $\sigma_{t+1,1}\left(\bx_{t+1,1}^{\textrm{US}}\right)\le\sigma_{t,b}\left(\bx_{t,b}^{\textrm{US}}\right),\forall t\in\left\{ 1,...,T\right\} ,\forall b\in\left\{ 1,...B\right\} $.
\end{proof}

\begin{lemma}
\label{lem:The-sum-variance_UCB}The sum of variances of the points
selected by the our PB2 algorithm $\sigma()$ is bounded by the sum
of variances by uncertainty sampling $\sigma^{\textrm{US}}()$. Formally, w.h.p.,
$\sum_{t=2}^{T}\sigma_{t,1}\left(\bx_{t,1}\right) \le\frac{1}{B}\sum_{t=1}^{T}\sum_{b=1}^{B}\sigma_{t,b}\left(\bx_{t,b}^{\textrm{US}}\right)$.
\end{lemma}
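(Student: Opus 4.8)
The plan is to derive the bound directly from Lemma \ref{lem:bound_UCB_US} by a short averaging-and-summation argument; no new probabilistic machinery is needed, since the variance comparison is deterministic once the sequence of evaluated points is fixed.

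First I would invoke Lemma \ref{lem:bound_UCB_US}, which supplies, for every $t\le T-1$ and every $b\in\{1,\dots,B\}$, the inequality $\sigma_{t+1,1}(\bx_{t+1,1})\le\sigma_{t,b}(\bx_{t,b}^{\textrm{US}})$ (only the outer two terms of the chain in that lemma are needed here). Because the left-hand side is independent of $b$, I can average the right-hand side over $b=1,\dots,B$ without changing the direction of the inequality, obtaining $\sigma_{t+1,1}(\bx_{t+1,1})\le\frac{1}{B}\sum_{b=1}^{B}\sigma_{t,b}(\bx_{t,b}^{\textrm{US}})$. This averaging step is the crux of the lemma: it is what introduces the $\frac{1}{B}$ factor and thus, ultimately, the improvement of the regret bound with population size $B$, even though the step itself is elementary.

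Next I would re-index ($t+1\mapsto t$) so the last inequality reads $\sigma_{t,1}(\bx_{t,1})\le\frac{1}{B}\sum_{b=1}^{B}\sigma_{t-1,b}(\bx_{t-1,b}^{\textrm{US}})$ for $t=2,\dots,T$, sum over $t$ from $2$ to $T$, and observe that the shifted double sum on the right equals $\frac{1}{B}\sum_{t=1}^{T-1}\sum_{b=1}^{B}\sigma_{t,b}(\bx_{t,b}^{\textrm{US}})$. Finally, since each $\sigma_{t,b}(\cdot)\ge 0$, appending the missing $t=T$ block only enlarges the right-hand side, which yields $\sum_{t=2}^{T}\sigma_{t,1}(\bx_{t,1})\le\frac{1}{B}\sum_{t=1}^{T}\sum_{b=1}^{B}\sigma_{t,b}(\bx_{t,b}^{\textrm{US}})$, as claimed.

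There is essentially no analytic obstacle; the only things to watch are the index shift (the PB2 sum starts at $t=2$ while the US sum starts at $t=1$, so the $t=T$ US block is padded in at the end) and the status of the ``with high probability'' qualifier --- it is inherited from the surrounding analysis (the high-probability events underpinning Lemmas \ref{lem:Lipschitz_bound}--\ref{lem:bound_fx_mux} on which the algorithm's trajectory is conditioned), not something demanded by the variance inequality itself.
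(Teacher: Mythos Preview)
Your proof is correct and follows essentially the same averaging-and-summation route as the paper: both arguments invoke Lemma~\ref{lem:bound_UCB_US} to bound each $\sigma_{t+1,1}(\bx_{t+1,1})$ by the batch of US variances at step $t$, then sum over $t$ and drop nonnegative boundary terms. Your version is in fact slightly cleaner, since you average the inequality $\sigma_{t+1,1}(\bx_{t+1,1})\le\sigma_{t,b}(\bx_{t,b}^{\textrm{US}})$ over all $b=1,\dots,B$ directly, whereas the paper splits off the $b=1$ case using the separate bound $\sigma_{t,1}(\bx_{t,1})\le\sigma_{t,1}(\bx_{t,1}^{\textrm{US}})$ before summing; the end result is identical.
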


\begin{proof}
By the definition of uncertainty sampling in Lem. \ref{lem:bound_UCB_US},
we have $\sigma_{t+1,1}\left(\bx_{t+1,1}\right)\le\sigma_{t,b}\left(\bx_{t,b}^{\textrm{US}}\right),\forall t\in\left\{ 1,...,T\right\} ,\forall b\in\left\{ 2,...B\right\} $
and $\sigma_{t,1}\left(\bx_{t,1}\right)\le\sigma_{t,1}\left(\bx_{t,1}^{\textrm{US}}\right)$
where $\bx_{t,1}$ is the point chosen by our PB2 and $\bx_{t,1}^{\textrm{US}}$
is from uncertainty sampling. Summing all over $B$, we obtain
\begin{align*}
\sigma_{t,1}\left(\bx_{t,1}\right)+\left(B-1\right)\sigma_{t+1,1}\left(\bx_{t+1,1}\right) & \le\sigma_{t,1}\left(\bx_{t,1}^{\textrm{US}}\right)+\sum_{b=2}^{B}\sigma_{t,b}\left(\bx_{t,b}^{\textrm{US}}\right)\\
\sum_{t=1}^{T}\sigma_{t,1}\left(\bx_{t,1}\right)+\left(B-1\right)\sum_{t=1}^{T}\sigma_{t+1,1}\left(\bx_{t+1,1}\right) & \le\sum_{t=1}^{T}\sum_{b=1}^{B}\sigma_{t,b}\left(\bx_{t,b}^{\textrm{US}}\right)\,\,\,\,\,\,\,\,\,\,\,\textrm{by}\textrm{\,summing}\thinspace\textrm{over}\thinspace T\\
\sum_{t=2}^{T}\sigma_{t,1}\left(\bx_{t,1}\right) & \le\frac{1}{B}\sum_{t=1}^{T}\sum_{b=1}^{B}\sigma_{t,b}\left(\bx_{t,b}^{\textrm{US}}\right).
\end{align*}
The last equation is obtained because of $\sigma_{1,1}\left(\bx_{1,1}\right)\ge0$
and $\left(B-1\right)\sigma_{T+1,1}\left(\bx_{T+1,1}\right)\ge0$.
\end{proof}
\begin{lemma}
\label{lem:bound_sum_sigma}Let $C_{1}=\frac{32}{\log\left(1+\sigma_{f}^{-2}\right)}$,
$\sigma_{f}^{2}$ be the measurement noise variance and $\tilde{\gamma}_{TB}:=\max\tilde{\idenmat}$
be the maximum information gain of time-varying kernel, we have $\sum_{t=1}^{T}\sum_{b=1}^{B}\sigma_{t,b}^{2}(\bx_{t,b}^{\textrm{US}})\le\frac{C_{1}}{16}\tilde{\gamma}_{TB}$
where $\bx_{t,b}^{\textrm{US}}$ is the point selected by uncertainty
sampling (US).
\end{lemma}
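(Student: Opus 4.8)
The plan is to reproduce the classical variance-sum argument of \cite{gp_ucb} (their Lemmas~5.3--5.4), adapted to the batch, time-varying kernel $\tilde{k}=k^{\mathrm{SE}}\circ k^{\mathrm{time}}$, and to close with the maximum-information-gain bound of Lemma~\ref{lem:bound_MIG_timevarying}. The whole argument is about the fixed set of $TB$ points $\bx_{t,b}^{\textrm{US}}$ produced by uncertainty sampling, indexed in the order they are added to the data set.

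First I would record the elementary pointwise bound. Since both $k^{\mathrm{SE}}$ and $k^{\mathrm{time}}$ are normalized ($k(\bx,\bx)=1$, $k^{\mathrm{time}}(t,t)=1$), the prior variance is $1$, hence every posterior variance obeys $\sigma_{t,b}^{2}(\bx_{t,b}^{\textrm{US}})\le 1$ and so $\sigma_f^{-2}\sigma_{t,b}^{2}(\bx_{t,b}^{\textrm{US}})\le\sigma_f^{-2}$. On $[0,\sigma_f^{-2}]$ the map $y\mapsto\log(1+y)/y$ is decreasing (its numerator derivative $\tfrac{y}{1+y}-\log(1+y)\le0$), so $\log(1+y)\ge\frac{\log(1+\sigma_f^{-2})}{\sigma_f^{-2}}\,y$ there; taking $y=\sigma_f^{-2}\sigma_{t,b}^{2}(\bx_{t,b}^{\textrm{US}})$ yields
\[
\sigma_{t,b}^{2}(\bx_{t,b}^{\textrm{US}})\;\le\;\frac{1}{\log(1+\sigma_f^{-2})}\,\log\!\Big(1+\sigma_f^{-2}\sigma_{t,b}^{2}(\bx_{t,b}^{\textrm{US}})\Big),\qquad\forall\, t\le T,\ b\le B.
\]

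Next I would sum over all $t\le T$ and $b\le B$ and identify the right-hand side with accumulated mutual information. Because US augments the data set one point at a time, $D_{t,b}=D_{t,b-1}\cup\{\bx_{t,b}^{\textrm{US}}\}$ (carried over across $t$), the ``$\log\det$ equals product of successive conditional variances'' identity for jointly Gaussian variables gives
\[
\tilde{\idenmat}(f_{TB};y_{TB})=\tfrac12\log\det\!\big(\idenmat_{TB}+\sigma_f^{-2}\tilde{K}_{TB}\big)=\tfrac12\sum_{t=1}^{T}\sum_{b=1}^{B}\log\!\Big(1+\sigma_f^{-2}\sigma_{t,b}^{2}(\bx_{t,b}^{\textrm{US}})\Big),
\]
where $\sigma_{t,b}^2(\cdot)$ is exactly the posterior variance under $\tilde{k}$ conditioned on the points added before it. Since the US points are one admissible set of $TB$ evaluation points, $\tilde{\idenmat}(f_{TB};y_{TB})\le\tilde{\gamma}_{TB}$ by definition of the maximum information gain (further bounded via Lemma~\ref{lem:bound_MIG_timevarying}). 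Combining with the pointwise inequality,
\[
\sum_{t=1}^{T}\sum_{b=1}^{B}\sigma_{t,b}^{2}(\bx_{t,b}^{\textrm{US}})\le\frac{1}{\log(1+\sigma_f^{-2})}\sum_{t=1}^{T}\sum_{b=1}^{B}\log\!\Big(1+\sigma_f^{-2}\sigma_{t,b}^{2}(\bx_{t,b}^{\textrm{US}})\Big)=\frac{2\,\tilde{\idenmat}(f_{TB};y_{TB})}{\log(1+\sigma_f^{-2})}\le\frac{2}{\log(1+\sigma_f^{-2})}\tilde{\gamma}_{TB}=\frac{C_1}{16}\tilde{\gamma}_{TB}.
\]

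The elementary inequality and the summation are routine; the step needing care is the determinant identity in the batch/time-varying setting — checking that the incremental $\log$-terms generated by sequentially inserting the US points, which run over both the outer index $t$ and the inner batch index $b$, genuinely reassemble into $\log\det(\idenmat_{TB}+\sigma_f^{-2}\tilde{K}_{TB})$ and hence into $\tilde{\gamma}_{TB}$. I expect that bookkeeping (together with keeping the $\sigma_f^{2}$ versus $\sigma_f^{-2}$ convention consistent so that the constant lands exactly on $C_1/16$) to be the main, though modest, obstacle.
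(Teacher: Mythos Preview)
Your proposal is correct and follows essentially the same approach as the paper: bound each $\sigma_{t,b}^{2}$ by $\frac{1}{\log(1+\sigma_f^{-2})}\log(1+\sigma_f^{-2}\sigma_{t,b}^{2})$ using $\sigma_{t,b}^{2}\le 1$, then identify the sum of log-terms with $2\tilde{\idenmat}(f_{TB};y_{TB})\le 2\tilde{\gamma}_{TB}$ via the sequential log-determinant identity (the paper cites Lemma~5.3 of \cite{gp_bucb} for this). The reference to Lemma~\ref{lem:bound_MIG_timevarying} is superfluous here---it is only used downstream in the theorem---but otherwise your argument matches the paper's.
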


\begin{proof}
We show that $\sigma_{t,b}^{2}(\bx_{t,b}^{\textrm{US}})=\sigma_{f}^{2}\left(\sigma_{f}^{-2}\sigma_{t,b}^{2}(\bx_{t,b}^{\textrm{US}})\right)\le\sigma_{f}^{2}C_{2}\log\left(1+\sigma_{f}^{-2}\sigma_{t,b}^{2}\left(\bx_{t,b}^{\textrm{US}}\right)\right),\forall b\le B,\forall t\le T$
where $C_{2}=\frac{\sigma_{f}^{-2}}{\log\left(1+\sigma_{f}^{-2}\right)}\ge1$
and $\sigma_{f}^{2}$ is the measurement noise variance. We have the
above inequality because $s^{2}\le C_{2}\log\left(1+s^{2}\right)$
for $s\in\left[0,\sigma_{f}^{-2}\right]$ and $\sigma_{f}^{-2}\sigma_{t,b}^{2}\left(\bx_{t,b}^{\textrm{US}}\right)\le\sigma^{-2}k\left(\bx_{t,b}^{\textrm{US}},\bx_{t,b}^{\textrm{US}}\right)\le\sigma_{f}^{-2}$.
We then use Lemma 5.3 of \cite{gp_bucb} to have the
information gain over the points chosen by a time-varying kernel $\tilde{\idenmat}=\frac{1}{2}\sum_{t=1}^{T}\sum_{b=1}^{B}\log\left(1+\sigma_{f}^{-2}\sigma_{t,b}^{2}\left(\bx_{t,b}^{\textrm{US}}\right)\right)$.
Finally, we obtain
\begin{align*}
\sum_{t=1}^{T}\sum_{b=1}^{B}\sigma_{t,b}^{2}(\bx_{t,b}^{\textrm{US}}) & \le\sigma_{f}^{2}C_{2}\sum_{t=1}^{T}\sum_{b=1}^{B}\log\left(1+\sigma_{f}^{-2}\sigma_{t,b}^{2}\left(\bx_{t,b}^{\textrm{US}}\right)\right)=2\sigma_{f}^{2}C_{2}\tilde{\idenmat}=\frac{C_{1}}{16}\tilde{\gamma}_{TB}
\end{align*}
where $C_{1}=\frac{2}{\log\left(1+\sigma_{f}^{-2}\right)}$ and $\tilde{\gamma}_{TB}:=\max\tilde{\idenmat}$
is the definition of maximum information gain given by $T\times B$
data points from a GP for a specific time-varying kernel.
\end{proof}
\begin{theorem} \label{main_theorem}
Let the domain $\mathcal{D}\subset[0,r]^{d}$ be compact and convex
where $d$ is the dimension and suppose that the kernel is such that
$f\sim GP(0,k)$ is almost surely continuously differentiable and
satisfies Lipschitz assumptions for some $a,b$. Fix $\delta\in(0,1)$
and set $\beta_{T}=2\log\frac{\pi^{2}T^{2}}{2\delta}+2d\log rdbT^{2}\sqrt{\log\frac{da\pi^{2}T^{2}}{2\delta}}$.
Defining $C_{1}=32/\log(1+\sigma_{f}^{2})$, the PB2 algorithm satisfies
the following regret bound after $T$ time steps:
\begin{align*}
R_{TB}=\sum_{t=1}^{T} f_{t}(\bx_{t}^{*})-\max_{b=1,...,B} f_{t}(\bx_{t,b})\le & \sqrt{C_{1}T\beta_{T}\left(\frac{T}{\tilde{N}B}+1\right)\left(\gamma_{\tilde{N}B}+\left[\tilde{N}B\right]^{3}\omega\right)}+2
\end{align*}
with probability at least $1-\delta$, the bound holds for any
$\tilde{N}\in\left\{ 1,...,T\right\} $ and $B\ll T$.
\end{theorem}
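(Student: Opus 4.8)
The plan is to follow the standard GP-UCB style argument of \cite{gp_ucb}, adapted to the batch/parallel setting via the GP-BUCB machinery of \cite{gp_bucb} and to time variation via \cite{bogunovic2016time}. The high-level strategy is: (1) bound the batch instantaneous regret $r_t^B = f_t(\bx_t^*) - \max_{b\le B} f_t(\bx_{t,b})$ at each step by something proportional to a predictive standard deviation; (2) sum over $t$ and apply Cauchy--Schwarz to pass to sums of squared variances; (3) convert the sum of squared variances into the maximum information gain $\tilde\gamma_{TB}$; and (4) plug in the time-varying, batch-adjusted bound on $\tilde\gamma_{TB}$ from Lemma \ref{lem:bound_MIG_timevarying} to obtain the final closed form. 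The additive $+2$ will come from the two discretization error terms (each of order $\sum_t 1/t^2 \le \pi^2/6 \le 2$, or split into two such sums).

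First I would set up the confidence bounds. Using Lemma \ref{lem:bound_fx_mux} with the stated $\beta_T$ (which dominates $\beta_t$ for all $t\le T$), on an event of probability at least $1-\delta$ we have simultaneously $|f_t(\bx) - \mu_{t}(\bx)| \le \sqrt{\beta_T}\,\sigma_t(\bx)$ on the discretization $D_t$, together with the Lipschitz/discretization control from Lemmas \ref{lem:Lipschitz_bound} and \ref{lem:bound_discretization}. The key step specific to the batch setting is to relate the confidence bound that holds at the ``committed'' time $t$ (where the mean $\mu_{t,1}$ is frozen, by the GP-BUCB observation that variance does not depend on the unobserved $y$'s) to the true function. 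Concretely: $f_t(\bx_t^*) \le \mu_{t,1}([\bx_t^*]_t) + \sqrt{\beta_T}\sigma_{t,b}([\bx_t^*]_t) + 1/t^2$ using the discretization bound, and since the acquisition in Eq.~(\ref{eq:gp_bucb_acq}) maximizes $\mu_{t,1}(\bx) + \sqrt{\beta_T}\sigma_{t,b}(\bx)$ for each $b$, this is at most $\mu_{t,1}(\bx_{t,b}) + \sqrt{\beta_T}\sigma_{t,b}(\bx_{t,b}) + 1/t^2 \le f_t(\bx_{t,b}) + 2\sqrt{\beta_T}\sigma_{t,b}(\bx_{t,b}) + 1/t^2$ (using the confidence bound again on $f_t(\bx_{t,b})$ versus $\mu_{t,1}$ — this is where the batch-delayed confidence widening must be absorbed; here I would rely on the fact that $\sigma_{t,b}\ge\sigma_{t,1}$ and the chosen $\beta_T$ is already inflated enough, in the spirit of \cite{gp_bucb}). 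Choosing the best agent $b$ and also taking $b=1$ where convenient, I get $r_t^B \le 2\sqrt{\beta_T}\,\min_b \sigma_{t,b}(\bx_{t,b}) + 1/t^2$, and with a little care $r_t^B \le 2\sqrt{\beta_T}\,\sigma_{t,1}(\bx_{t,1}) + 2/t^2$ or similar, so that $R_{TB} \le 2\sqrt{\beta_T}\sum_{t=1}^T \sigma_{t,1}(\bx_{t,1}) + 2$.

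Next I would chain the variance sum through uncertainty sampling: by Lemma \ref{lem:The-sum-variance_UCB}, $\sum_{t} \sigma_{t,1}(\bx_{t,1}) \le \frac{1}{B}\sum_{t=1}^T\sum_{b=1}^B \sigma_{t,b}(\bx_{t,b}^{\textrm{US}})$; by Cauchy--Schwarz this is at most $\frac{1}{B}\sqrt{TB \sum_{t}\sum_b \sigma_{t,b}^2(\bx_{t,b}^{\textrm{US}})}$; and by Lemma \ref{lem:bound_sum_sigma}, $\sum_t\sum_b \sigma_{t,b}^2(\bx_{t,b}^{\textrm{US}}) \le \frac{C_1}{16}\tilde\gamma_{TB}$. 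Combining, $R_{TB} \le 2\sqrt{\beta_T}\cdot\frac{1}{B}\sqrt{TB\cdot\frac{C_1}{16}\tilde\gamma_{TB}} + 2 = \sqrt{\frac{C_1 T \beta_T \tilde\gamma_{TB}}{B}} + 2$ (tracking constants: the $2\cdot\frac{1}{4}=\frac12$ from $\sqrt{C_1/16}$, so $4\beta_T\cdot\frac{TB}{B^2}\cdot\frac{C_1}{16}\tilde\gamma_{TB} = \frac{C_1 T\beta_T\tilde\gamma_{TB}}{4B}$ — the factor discrepancy with the stated bound is absorbed into the generous definition $C_1 = 32/\log(1+\sigma_f^2)$). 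Finally, substituting $\tilde\gamma_{TB} \le \left(\frac{T}{\tilde N B}+1\right)\left(\gamma_{\tilde N B} + \sigma_f^{-2}[\tilde N B]^3\omega\right)$ from Lemma \ref{lem:bound_MIG_timevarying} and folding $\sigma_f^{-2}$ into $C_1$ yields exactly the claimed $R_{TB} \le \sqrt{C_1 T\beta_T\left(\frac{T}{\tilde N B}+1\right)\left(\gamma_{\tilde N B}+[\tilde N B]^3\omega\right)} + 2$.

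The main obstacle I anticipate is step (1), specifically making the batch argument rigorous: in the batch setting the mean $\mu_{t,1}$ is evaluated with fewer observations than the variance $\sigma_{t,b}$, so the naive confidence bound $|f_t - \mu_{t,1}| \le \sqrt{\beta_T}\sigma_{t,b}$ is not immediate — one needs the GP-BUCB-style argument that conditioning on the (as-yet-unobserved) batch outcomes does not change the variance, that $\sigma_{t,b}$ is monotone decreasing in acquired data, and hence that an appropriately inflated $\beta$ controls the discrepancy uniformly over the batch. A second delicate point is the bookkeeping of the discretization error across the batch (each of the $B$ points within a round must be handled, not just one per round), and ensuring the union bound over $t$ and over $|D_t|$ still gives total failure probability $\delta$ with the stated $\beta_T$; I would allocate $\delta/3$ to each of the three events (Lipschitz, selected-points confidence, all-of-$D_t$ confidence) as in Lemma \ref{lem:bound_fx_mux}. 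Everything else is routine Cauchy--Schwarz and constant-chasing.
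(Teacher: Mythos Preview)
Your proposal is correct and follows the same overall route as the paper: bound the per-round batch regret by a multiple of a predictive standard deviation, chain through uncertainty sampling (Lemma~\ref{lem:The-sum-variance_UCB}), apply Cauchy--Schwarz, convert to information gain (Lemma~\ref{lem:bound_sum_sigma}), and finish with the time-varying batch bound (Lemma~\ref{lem:bound_MIG_timevarying}).

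The one place you are over-thinking is your ``main obstacle''. The paper sidesteps the batch confidence-bound issue entirely by bounding the batch instantaneous regret using only the \emph{first} element of each batch: $r_t^B = \min_{b\le B} r_{t,b} \le f_t(\bx_t^*) - f_t(\bx_{t,1})$. For $b=1$ there is no delay between the mean and the variance --- both $\mu_{t,1}$ and $\sigma_{t,1}$ are computed from the same data --- so the standard GP-UCB confidence bound (Lemma~\ref{lem:bound_fx_mux}) applies directly and gives $r_t^B \le 2\sqrt{\beta_t}\,\sigma_{t,1}(\bx_{t,1}) + 1/t^2$ with no GP-BUCB-style inflation needed. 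The batch structure enters only afterwards, through Lemma~\ref{lem:The-sum-variance_UCB}, which converts $\sum_t \sigma_{t,1}(\bx_{t,1})$ into $\frac{1}{B}\sum_{t,b}\sigma_{t,b}(\bx_{t,b}^{\textrm{US}})$ and supplies the $1/B$ improvement. So you do not need to worry about $\mu_{t,1}$ versus $\sigma_{t,b}$ for $b>1$ at all in step~(1).

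Two minor bookkeeping points you can align with the paper: only one $1/t^2$ appears (discretization is applied once, to $\bx_t^*$), and the isolated boundary term $2\sqrt{\beta_T}\sigma_{1,1}(\bx_{1,1})$ that arises because Lemma~\ref{lem:The-sum-variance_UCB} starts at $t=2$ is absorbed by a simple case split, doubling the constant in front of the uncertainty-sampling sum before Cauchy--Schwarz.
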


\begin{proof}
Let $\bx_{t}^{*}=\arg\max_{\forall\bx}f_{t}(\bx)$ and $\bx_{t,b}$
be the point chosen by our algorithm at iteration $t$ and batch element
$b$, we define the (time-varying) instantaneous regret as $r_{t,b}=f_{t}(\bx_{t}^{*})-f_{t}(\bx_{t,b})$
and the (time-varying) batch instantaneous regret over $B$ points
is as follows
\begin{align}
r_{t}^{B} & =\min_{b\le B}r_{t,b}=\min_{b\le B}f_{t}(\bx_{t}^{*})-f_{t}(\bx_{t,b}),\forall b\le B\nonumber \\
 & \le f_{t}(\bx_{t}^{*})-f_{t}(\bx_{t,1})\le\text{\ensuremath{\mu_{t}(\bx_{t}^{*})+\sqrt{\kappa_{t}}\sigma_{t}(\bx_{t}^{*})+\frac{1}{t^{2}}-f_{t}(\bx_{t,1})}}\,\,\,\,\,\,\,\,\,\,\,\,\,\,\,\,\,\,\,\,\,\,\,\,\,\,\,\,\,\,\textrm{by}\,\textrm{Lem}.\,\ref{lem:bound_discretization}\nonumber \\
 & \le\mu_{t}(\bx_{t,1})+\sqrt{\kappa_{t}}\sigma_{t}(\bx_{t,1})+\frac{1}{t^{2}}-f_{t}(\bx_{t,1})\le2\sqrt{\kappa_{t}}\sigma_{t}(\bx_{t,1})+\frac{1}{t^{2}}\label{eq:bound_rt}
\end{align}
where we have used the
property that $\mu_{t}(\bx_{t,1})+\sqrt{\beta_{t}}\sigma_{t}(\bx_{t,1})\ge\mu_{t}(\bx_{t}^{*})+\sqrt{\beta_{t}}\sigma_{t}(\bx_{t}^{*})$
by the definition of selecting $\bx_{t,1}=\arg\max_{\bx}\mu_{t}(\bx)+\sqrt{\beta_{t}}\sigma_{t}(\bx)$. Next, we bound the cumulative batch regret as
\begin{align}
R_{TB} & = \sum_{t=1}^{T}r_{t}^{B}  \le\sum_{t=1}^{T}\left(2\sqrt{\kappa_{t}}\sigma_{t}(\bx_{t,1})+\frac{1}{t^{2}}\right)\,\,\,\,\,\,\,\,\,\,\,\,\,\,\,\,\,\,\,\,\,\,\,\,\,\,\,\,\,\,\,\,\,\,\,\,\,\,\,\,\,\,\,\,\,\,\,\,\,\,\,\,\,\,\,\,\,\,\,\,\,\,\,\,\,\,\,\,\,\,\,\,\,\,\,\,\,\,\,\,\,\,\,\,\,\,\,\,\,\,\,\,\,\,\textrm{by\,Eq.\,(\ref{eq:bound_rt})}\nonumber \\
 & \le2\sqrt{\kappa_{T}}\sigma_{1}(\bx_{1,1})+\frac{2\sqrt{\kappa_{T}}}{B}\sum_{t=1}^{T}\sum_{b=1}^{B}\sigma_{t,b}\left(\bx_{t,b}^{\textrm{US}}\right)+\sum_{t=1}^{T}\frac{1}{t^{2}}\,\,\,\,\,\,\,\,\textrm{by}\,\textrm{Lem}.\,\ref{lem:bound_UCB_US}\thinspace\textrm{and\thinspace\ensuremath{\kappa_{T}\ge\kappa_{t},\forall t}\ensuremath{\le T}}\nonumber \\
 & \le\frac{4\sqrt{\kappa_{T}}}{B}\sum_{t=1}^{T}\sum_{b=1}^{B}\sigma_{t,b}\left(\bx_{t,b}^{\textrm{US}}\right)+\sum_{t=1}^{T}\frac{1}{t^{2}}\label{eq:R_T_two_terms_split}\\
 & \le\frac{4}{B}\sqrt{\kappa_{T}\times TB\sum_{t=1}^{T}\sum_{b=1}^{B}\sigma_{t,b}^{2}(\bx_{t,b}^{\textrm{US}})}+2\le\sqrt{C_{1}\frac{T}{B}\kappa_{T}\tilde{\gamma}_{TB}}+2\label{eq:R_T_step2}\\
 & \le\sqrt{C_{1}\frac{T}{B}\kappa_{T}\left(\frac{T}{\tilde{N} B}+1\right)\left(\gamma_{\tilde{N}B}+\frac{1}{\sigma_{f}^{2}}\left[\tilde{N} B\right]^{3}\omega\right)}+2\label{eq:R_T_last_step}
\end{align}
where $C_{1}=32/\log(1+\sigma_{f}^{2})$, $\bx_{t,b}^{\textrm{US}}$
is the point chosen by uncertainty sampling -- used to provide the
upper bound in the uncertainty. In Eq. (\ref{eq:R_T_two_terms_split}),
we take the upper bound by considering two possible cases: either
$\sigma_{1}(\bx_{1,1})\ge\frac{1}{B}\sum_{t=1}^{T}\sum_{b=1}^{B}\sigma_{t,b}\left(\bx_{t,b}^{\textrm{US}}\right)$
or $\frac{1}{B}\sum_{t=1}^{T}\sum_{b=1}^{B}\sigma_{t,b}\left(\bx_{t,b}^{\textrm{US}}\right)\ge\sigma_{1}(\bx_{1,1})$.
It results in $\frac{2}{B}\sum_{t=1}^{T}\sum_{b=1}^{B}\sigma_{t,b}\left(\bx_{t,b}^{\textrm{US}}\right)\ge\frac{1}{B}\sum_{t=1}^{T}\sum_{b=1}^{B}\sigma_{t,b}\left(\bx_{t,b}^{\textrm{US}}\right)+\sigma_{1}(\bx_{1,1})$.
In Eq. (\ref{eq:R_T_step2}) we have used $\sum_{t=1}^{\infty}\frac{1}{t^{2}}\le\pi^{2}/6\le2$
and $||z||_{1}\le\sqrt{T}||z||_{2}$ for any vector $z\in\mathcal{R}^{T}$.
In Eq. (\ref{eq:R_T_last_step}), we utilize Lem. \ref{lem:bound_MIG_timevarying}.

Finally, given the squared exponential (SE) kernel defined, $\gamma_{\tilde{N}B}^{SE}=\mathcal{O}(\left[\log\tilde{N}B\right]^{d+1})$,
the bound is $R_{TB}\le\sqrt{C_{1}\frac{T}{B}\beta_{T}\left(\frac{T}{\tilde{N}B}+1\right)\left((d+1)\log\left(\tilde{N}B\right)+\frac{1}{\sigma_{f}^{2}}\left[\tilde{N}B\right]^{3}\omega\right)}+2$
where $\tilde{N}\le T$ and $B\ll T$. 
\end{proof}

In our time-varying setting, if the time-varying function is highly correlated, i.e., the information between $f_1(.)$ and $f_T(.)$ does not change significantly, we have  $\omega \rightarrow 0$ and $\tilde{N} \rightarrow T$. Then, the regret bound grows sublinearly with the number of iterations $T$, i.e., $\lim_{T \rightarrow\infty}\frac{R_{TB}}{T}=0$. This bound suggests
that the gap between $f_{t}(\bx_{t})$ and the optimal $f_{t}(\bx_{t}^{*})$ vanishes asymptotically using PB2. In addition, our regret bound is tighter
and better with increasing batch size $B$.

On the other hand in the worst case, if the time-varying function is not correlated, such as $\tilde{N} \rightarrow 1$ and $ \omega \rightarrow 1$, then PB2 achieves the linear regret \citep{bogunovic2016time}.

\end{document}